\theoremstyle{plain}
\newtheorem{theorem}{Theorem}[section]
\newtheorem{lemma}{Lemma}
\theoremstyle{definition}
\newtheorem{definition}{Definition}
\theoremstyle{remark}
\theoremstyle{prob}
\newtheorem{prob}{Problem}
\definecolor{shade1}{RGB}{0,218,107}
\definecolor{shade2}{RGB}{150,255,150}
\definecolor{shade3}{RGB}{220,255,200}
\definecolor{yellow}{RGB}{255,255,204}
\setlist[itemize]{leftmargin=*}
\newcommand{\methodname}{\textsc{Grail}\xspace}
\newcommand{\CG}{\mathcal{G}\xspace}
\newcommand{\CB}{\mathcal{B}\xspace}
\newcommand{\CD}{\mathcal{D}\xspace}
\newcommand{\CV}{\mathcal{V}\xspace}
\newcommand{\CU}{\mathcal{U}\xspace}
\newcommand{\CE}{\mathcal{E}\xspace}
\newcommand{\CW}{\mathcal{W}\xspace}
\newcommand{\CS}{\mathcal{S}\xspace}
\newcommand{\CA}{\mathcal{A}\xspace}
\newcommand{\CJ}{\mathcal{J}\xspace}
\newcommand{\cW}{\mathbf{W}\xspace}
\newcommand{\CL}{\mathcal{L}\xspace}
\newcommand{\name}{\textsc{Grail}\xspace}
\newcommand{\gmix}{\textsc{Grail-Mix}\xspace}
\newcommand{\branch}{\textsc{Branch}\xspace}
\newcommand{\greed}{\textsc{Greed}\xspace}
\newcommand{\gedgnn}{\textsc{GedGnn}\xspace}
\newcommand{\simgnn}{\textsc{SimGnn}\xspace}
\newcommand{\graphsim}{\textsc{GraphSim}\xspace}
\newcommand{\tagsim}
{\textsc{TaGSim}\xspace}
\newcommand{\hmn}{\textsc{H2mn}\xspace}
\newcommand{\graphotsim}{\textsc{GraphOtSim}\xspace}
\newcommand{\gmn}{\textsc{Gmn}\xspace}
\newcommand{\graphedx}{\textsc{GraphEdX}\xspace}
\newcommand{\eric}{\textsc{Eric}\xspace}
\newcommand{\genn}{\textsc{Genn-A*}\xspace}
\newcommand{\ged}{\textsc{Ged}\xspace}
\newcommand*\colourcheck[1]{%
  \expandafter\newcommand\csname #1check\endcsname{\textcolor{#1}{\ding{52}}}%
}
\newcommand{\redxmark}{\textcolor{red}{\text{\sffamily X}}}
\newcommand{\graycircle}{\textcolor{gray}{\text{\sffamily O}}}
\begin{document}

\twocolumn[
\icmltitle{\methodname: Graph Edit Distance and Node Alignment using LLM-Generated Code}



\icmlsetsymbol{equal}{*}

\begin{icmlauthorlist}
\icmlauthor{Samidha Verma}{sch}{*}
\icmlauthor{Arushi Goyal}{yyy}{*}
\icmlauthor{Ananya Mathur}{yyy}{*}
\icmlauthor{Ankit Anand}{comp}
\icmlauthor{Sayan Ranu}{yyy}
\end{icmlauthorlist}

\icmlaffiliation{yyy}{Department of Computer Science and Engineering, IIT Delhi, India}
\icmlaffiliation{comp}{Google DeepMind, Montreal, Canada}
\icmlaffiliation{sch}{Yardi School of Artificial Intelligence, IIT Delhi, India}
\icmlcorrespondingauthor{Ananya Mathur}{cs5200416@iitd.ac.in}
\icmlcorrespondingauthor{Arushi Goyal}{cs5200418@iitd.ac.in}
\icmlcorrespondingauthor{Samidha Verma}{	samidha.verma@scai.iitd.ac.in}
\icmlcorrespondingauthor{Ankit Anand}{anandank@google.com}
\icmlcorrespondingauthor{Sayan Ranu}{sayanranu@iitd.ac.in}
\icmlkeywords{Machine Learning, ICML}

\vskip 0.3in
]



\printAffiliationsAndNotice{\icmlEqualContribution} 

\begin{abstract}
\vspace{-0.05in}
Graph Edit Distance (\ged) is a widely used metric for measuring similarity between two graphs. Computing the optimal \ged is NP-hard, leading to the development of various neural and non-neural heuristics. While neural methods have achieved improved approximation quality compared to non-neural approaches, they face significant challenges: \textit{(1)} They require large amounts of ground truth data, which is itself NP-hard to compute. \textit{(2)} They operate as black boxes, offering limited interpretability. \textit{(3)} They lack cross-domain generalization, necessitating expensive retraining for each new dataset. We address these limitations with \methodname, introducing a paradigm shift in this domain. Instead of training a neural model to predict \ged, \methodname employs a novel combination of large language models (LLMs) and automated prompt tuning to generate a \textit{program} that is used to compute \ged. This shift from predicting \ged to generating programs imparts various advantages, including end-to-end interpretability and an autonomous self-evolutionary learning mechanism without ground-truth supervision. Extensive experiments on seven datasets confirm that \methodname not only surpasses state-of-the-art \ged approximation methods in prediction quality but also achieves robust cross-domain generalization across diverse graph distributions.

\end{abstract}

\vspace{-0.3in}
\section{Introduction and Related Work}
\label{sec: intro}
\vspace{-0.1in}
\textit{Graph Edit Distance} (\ged) quantifies the dissimilarity between two graphs as the minimum number of \textit{edits} required to transform one graph into another. An edit may comprise adding or deleting nodes and edges or replacing node and edge labels. Fig.~\ref{fig:ged_example} presents an example. Computing \ged is NP-hard~\cite{ranjan2022greed} and APX-hard~\cite{grampa}. Owing to its numerous applications~\cite{gedthesis,divquery,fugal}, polynomial-time heuristics are designed in practice.

\begin{figure}[t]
    \centering
    \includegraphics[width=\linewidth]{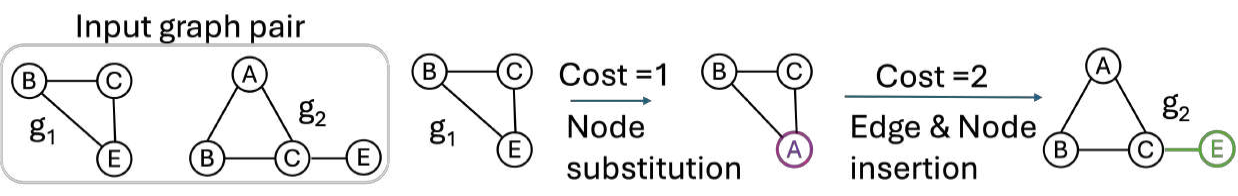}
    \vspace{-0.3in}
    \caption{Illustration of edit path from  $g_1$ to $g_2$ with \ged $3$.}
    \label{fig:ged_example}
    \vspace{-0.2in}
\end{figure}

\begin{table*}[t]
\vspace{-0.05in}
    \centering
    \scalebox{0.79}{
    \begin{tabular}{lp{1.39in}p{0.8in}p{1.299in}c}
    \toprule
       Name  & End-to-end interpretable & Cross-domain generalization & Non-reliant on NP-hard supervision & Accurate \\
       \midrule  \greed~\cite{ranjan2022greed} & \redxmark & \redxmark & \redxmark &\greencheck\\
             \gedgnn~\cite{piao2023computing} & \graycircle  & \redxmark & \redxmark &\greencheck\\
            \hmn~\cite{h2mn} & \redxmark & \redxmark & \redxmark &\greencheck\\
             \eric~\cite{eric} & \redxmark & \redxmark & \redxmark &\greencheck\\
             \graphedx~\cite{graphedx} & \redxmark & \redxmark & \redxmark &\greencheck\\
             \graphotsim~\cite{graphotsim} & \redxmark & \redxmark & \redxmark &\greencheck\\
             \graphsim~\cite{graphsim} & \redxmark & \redxmark & \redxmark &\greencheck\\
             \tagsim~\cite{tagsim} & \redxmark & \redxmark & \redxmark &\greencheck\\
             \gmn~\cite{icmlged} & \redxmark & \redxmark & \redxmark &\greencheck\\
            \genn~\cite{genn} & \graycircle & \redxmark & \redxmark &\greencheck\\
            \simgnn~\cite{simgnn} & \redxmark & \redxmark & \redxmark &\greencheck\\
            Non-neural approaches~\cite{blumenthal2020comparing} & \graycircle & \greencheck & \greencheck &\redxmark\\
         \methodname & \greencheck & \greencheck & \greencheck &\greencheck\\
         \bottomrule
    \end{tabular}}
    \vspace{-0.1in}
    \caption{Summary of the drawbacks of existing algorithms and the proposed algorithm \methodname. \greencheck{} indicates satisfaction of a desirable property, \redxmark{} indicates non-satisfaction, and \graycircle{} indicates partial satisfaction.
While \gedgnn{}, \genn{}, and traditional non-neural approaches achieve partial interpretability by providing edit paths corresponding to the \ged{}, they do not explain the semantic reasoning behind these paths. In contrast, \methodname{} achieves end-to-end interpretability through its code-based output, where each decision can be traced to its underlying logical reasoning. Non-neural approaches utilize unsupervised learning, enabling cross-domain generalization. However, their approximation errors are significantly higher on average than neural approaches, as demonstrated in \S~\ref{sec: experiments}.}
\vspace{-0.2in}
    \label{tab:gaps}
\end{table*}
\vspace{-0.1in}
\subsection{Existing Works and their Limitations}
\vspace{-0.05in}
Existing heuristics to approximate \ged can be broadly grouped into two paradigms: non-neural and neural.\\
\textbf{Non-Neural Methods:} A comprehensive survey on non-neural methods is available at~\citet{blumenthal2020comparing}. 
These approaches leverage techniques such as transformations to the linear sum assignment  (\textsc{NODE}~\cite{NODE_ADJ_IP}, \textsc{BRANCH-TIGHT}~\cite{BRANCH_TIGHT}),  mixed integer programming (\textsc{MIP}) (\textsc{LP-GED-F2}~\cite{lerouge2017new}, \textsc{ADJ-IP}~\cite{NODE_ADJ_IP}, \textsc{COMPACT-MIP}~\cite{COMPACT_MIP}), local search methods (\textsc{IPFP}~\cite{leordeanu2009integer}), and approximations to the quadratic asssignment problem~\cite{eugene}.\\ 
\textbf{Neural Methods: } Recent literature shows a shift towards graph neural network-based approaches for approximating \ged, driven by their superior approximation quality compared to non-neural methods~\cite{ranjan2022greed,h2mn,simgnn,piao2023computing,genn,eric,graphedx,graphsim,graphotsim,icmlged}. However, these advancements come with limitations, as summarized in Table~\ref{tab:gaps}. 
\vspace{-0.3in}
\begin{itemize}
  \item \textbf{Lack of interpretability: }Most neural methods only predict the \ged and not the corresponding edit path. The edit path is essential for various applications such as identifying functions of protein complexes ~\cite{doi:10.1073/pnas.0806627105}, image alignment ~\cite{Conte2003GraphMA}, and uncovering gene-drug regulatory pathways ~\cite{chen2019hogmmnc}. Few neural methods that predict the edit path~\cite{piao2023computing,genn} rely on expensive ground truth computation, which can only be attained for very small graphs ($\approx 10$ nodes). For larger graphs, random edits are made to synthetic graphs to generate the training samples. 
 \vspace{-0.1in} 
    \item \textbf{NP-hard training data: }The training dataset for neural methods consists of graph pairs and their true \ged. \ged computation is NP-hard. Therefore, generating this training data is prohibitively expensive and restricted to small graphs only. Hence, approximation error deteriorates on larger graphs.~\cite{ranjan2022greed}
   \vspace{-0.05in} 
    \item \textbf{Lack of generalization: }Neural \ged approximators struggle to generalize across datasets. For datasets from different domains (e.g., chemical compounds vs. function-call graphs), the node label sets often differ. Since the number of parameters in a GNN depends on the feature dimensions of the nodes, GNNs fail to generalize across domains. Even within the same domain, as demonstrated later in~\S\ref{sec: experiments}, distribution shifts in structural and node label distributions lead to increased approximation error. This limitation necessitates generating ground-truth data and training separate models for each dataset. Given that generating training data is NP-hard, this pipeline becomes highly resource-intensive.
\end{itemize}
\vspace{-0.2in}
\subsection{Contributions} 
\vspace{-0.05in}
We address the above-outlined limitations through \methodname: \underline{Gr}aph Edit Dist\underline{a}nce and Node Al\underline{i}gnment using \underline{L}LM-Generated Code. \methodname introduces a paradigm shift in the domain of \ged approximations through the following novel innovations. 


\vspace{-0.15in}
\begin{itemize}
\item {\bf Problem formulation:} We shift the learning objective from approximating \ged to learning a \textit{program} that approximates \ged. This reformulation provides end-to-end interpretability, as each algorithmic decision can be traced to its underlying logical reasoning. Moreover, by elevating the output to a higher level of abstraction through code generation, we achieve superior generalization across datasets, domains, graph sizes, and label distributions. 
\vspace{-0.2in}
\item {\bf LLM-guided program discovery: } The algorithmic framework of \methodname is grounded on three novel design choices. First, we map the problem of approximating \ged to \textit{maximum weight bipartite matching}, where the weights of the bipartite graph are computed using an LLM-generated program. Second, the prompt provided to the LLM is tuned through an evolutionary algorithm~\cite{romera2024mathematical}. Third, our prompt-tuning methodology eliminates the need for ground-truth \ged data by designing a prediction framework where the prediction is guaranteed to be an \textit{upper bound} to the true \ged. Hence, minimizing the upper-bound is equivalent to minimizing the approximation error, thereby overcoming a critical bottleneck of existing neural approaches.
\vspace{-0.07in}
\item \textbf{Comprehensive Empirical Evaluation:} Through extensive experiments across 6 datasets, we demonstrate that \methodname discovers \textit{foundational} code-based heuristics. Specifically, these heuristics not only surpass the state-of-the-art methods in \ged computation but also exhibit generalization across diverse datasets and domains. This crucial feature eliminates the need for costly dataset-specific training, thereby addressing a significant limitation of existing neural algorithms.
\end{itemize}

\vspace{-0.25in}
\section{Preliminaries and Problem Formulation}
\label{sec:formulation}
\vspace{-0.05in}
\begin{definition}[Graph] \textit{We represent a node-labeled undirected graph as $\mathcal{G(V,E, L)}$ where $\mathcal{V}$ = $\{v_1, \cdots, v_{|\mathcal{V}|}\}$ is the set of nodes, $\mathcal{E}\subseteq \CV\times\CV$ is the edge set and $\mathcal{L}$ : $\CV \rightarrow \Sigma$ is a labeling function that
maps nodes to labels, where $\Sigma$ is the set of all labels.}
\end{definition}
\vspace{-0.05in}
In unlabeled graphs, all nodes are assigned the same label.
\begin{definition}[Node Mapping] \textit{A node mapping between two graphs $\mathcal{G}_1$ and $\mathcal{G}_2$, each consisting of $n$ nodes, refers to a bijection $\pi$ : $\mathcal{V}_1$ $\rightarrow$ $\mathcal{V}_2$, where every node $v$ $\in$ $\mathcal{V}_1$ is uniquely mapped to a node $\pi(v)$ $\in$ $\mathcal{V}_2$. }
\end{definition}
\vspace{-0.1in}
\textbf{Extension to graphs of different sizes:} When dealing with two graphs $\mathcal{G}_1$ and $\mathcal{G}_2$ with different numbers of nodes, $n_1$ and $n_2$ respectively, such that $n_1 < n_2$, the smaller graph $\mathcal{G}_1$ can be extended to match the size of $\mathcal{G}_2$ by introducing $n_2-n_1$ additional isolated \textit{dummy} nodes. These new nodes are labeled with a unique identifier, $\epsilon$, indicating that they are placeholders with no connections. From this point onward, we assume that any pair of graphs in consideration have an equal number of nodes, with smaller graphs being augmented by dummy nodes as necessary.
\begin{definition}[\ged under a node mapping $\pi$] \textit{Given a node mapping $\pi$, the cost function for calculating graph edit distance between graphs $\mathcal{G}_1(\CV_1,\CE_1,\CL_1)$ and $\mathcal{G}_2(\CV_2,\CE_2,\CL_2)$ is expressed as: 
\vspace{-0.1in}
\[
\text{\ged}_{\pi}(\mathcal{G}_1, \mathcal{G}_2) = \sum_{v_1 \in \mathcal{V}_1} \mathbb{I}\left(\CL_1(v_1) \neq \CL_2(\pi(v_1))\right)
\]
\vspace{-0.15in}
\[
+ \frac{1}{2}\sum_{v_1 \in \mathcal{V}_1}\sum_{v_2 \in \mathcal{V}_1}\mathbb{I}\left( e_1(v_1, v_2)\neq e_2(\pi(v_1), \pi(v_2)) \right)
\]
\vspace{-0.1in}
where,
\vspace{-0.1in}
\begin{itemize}
    \item $e_i(u,v)$ returns 1 if the edge $(u,v)\in\CE_i$ in graph $\CG_i$, $0$ otherwise.
    \vspace{-0.1in}
    \item $\mathbb{I}(A)$ is the indicator function, which is $1$ if the condition $A$ holds, and $0$ otherwise.
\end{itemize}}
\end{definition}
\vspace{-0.1in}
\textbf{Interpretation of edit path from node mapping:} The first part of the equation captures node mismatches where it evaluates the label differences between nodes in $\mathcal{G}_1$ and $\mathcal{G}_2$. 
Mapping a dummy node to a real node (or vice versa) results in a label mismatch, reflecting the insertion or deletion of a node, while a mismatch between real nodes denotes a substitution. The second part of the equation captures edge mismatches. Specifically, if an existing edge in $\mathcal{G}_1$ (i.e., $e_1
(v_1, v_2) = 1$) is mapped to a non-existing edge in $\mathcal{G}_2$ (i.e., $e_2(\pi(v_1), \pi(v_2)) = 0$) or vice versa, the cost is $1$ representing edge deletion and insertion, respectively. 
\begin{definition}[Graph edit distance (\ged)]
\label{def:ged}
\textit{The \ged between graphs $\CG_1$ and $\CG_2$ is the minimum \ged across all possible node mappings.}
\vspace{-0.05in}
\begin{equation}
    \text{\ged}(\CG_1,\CG_2)=\min_{\forall \pi\in\mathcal{M}}\left\{\ged_{\pi}(\CG_1,\CG_2)\right\}
\end{equation}
\vspace{-0.2in}

\textit{Here, $\mathcal{M}$ denotes the universe of all possible mappings.}
\end{definition}
\vspace{-0.1in}
The problem is hard (NP-hard and APX-hard) since the cardinality of $\mathcal{M}$ is $n!$, where $n=\max\{|\CV_1|,|\CV_2|\}$.

The problem of learning to code for approximating \ged is defined as follows.
\begin{prob}[Learning to code for \ged] 
\label{prob:ged}
Given a set of training graph pairs $\mathbb{T}=\{\langle\mathcal{G}_1, \mathcal{G}'_1\rangle,\langle\mathcal{G}_2, \mathcal{G}'_2\rangle,\cdots,\langle\mathcal{G}_n, \mathcal{G}'_n\rangle\}$, learn a program $P:(\CG_t,\CG'_t)\rightarrow \mathbb{Z}+$ that takes as input a graph pair $\langle\mathcal{G}_t,\mathcal{G}'_t\rangle\in\mathbb{T}$, and outputs a non-negative integral distance that minimizes
\vspace{-0.1in}
\begin{equation}
\label{eq:prob1}
\sum_{t=1}^n\left\lvert P(\CG_t,\CG'_t)-\ged(\mathcal{G}_t, \mathcal{G}'_t)\right\rvert
\end{equation}
\end{prob}
\vspace{-0.05in}
Note that our training set consists solely of graph pairs, without requiring their true \ged, which is computationally prohibitive due to its NP-hardness. As we will elaborate in the next section, we identify polynomial-time computable upper bounds for the true \ged and reformulate the optimization objective to minimize this upper bound. This autonomous self-evolutionary learning mechanism overcomes a significant limitation of neural \ged approximators. 
\begin{figure*}
    \centering
    \includegraphics[width=5.9in]{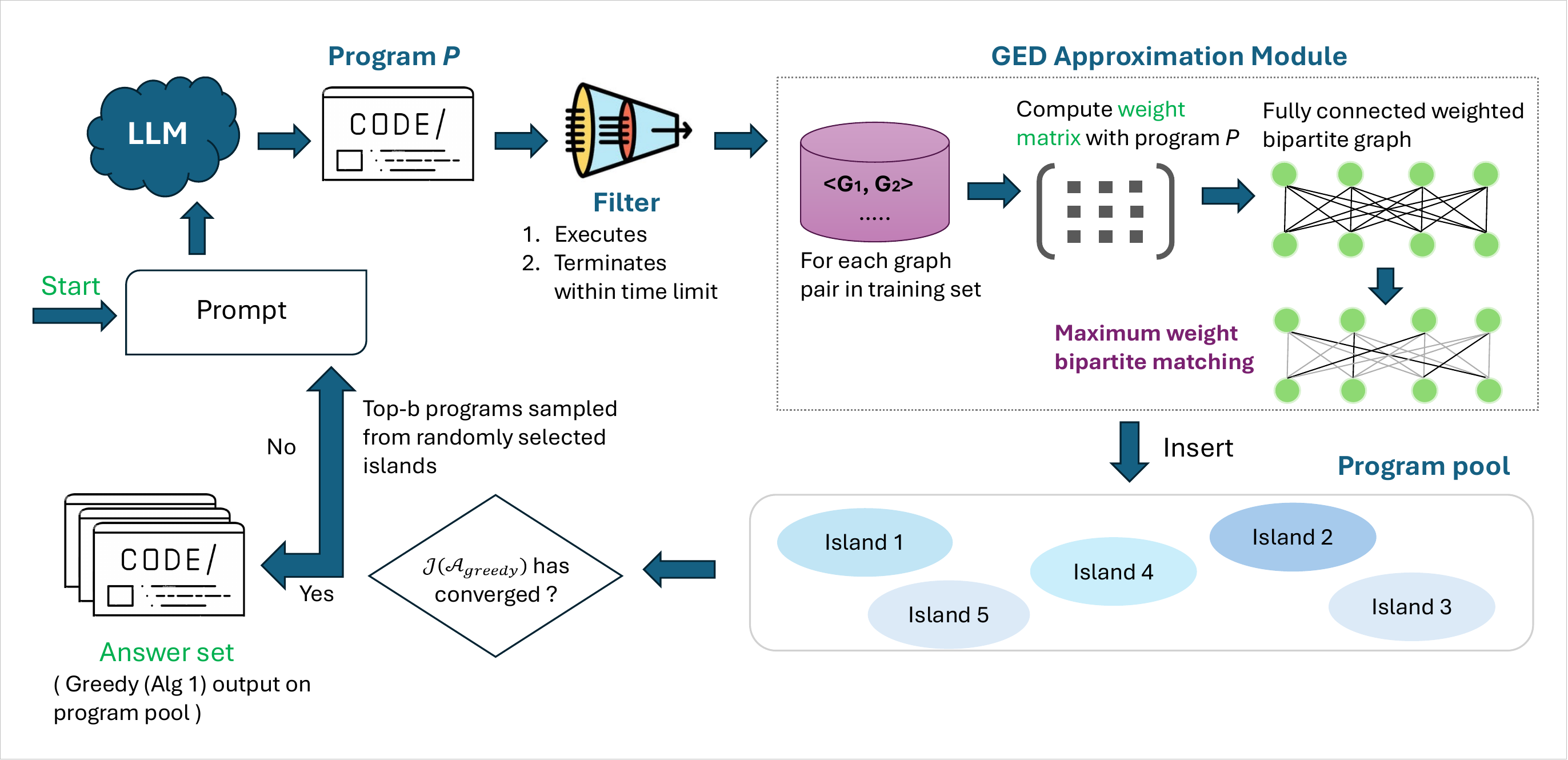}
    \vspace{-0.35in}
    \caption{Pipeline of \methodname.}
    \label{fig:pipeline}
    \vspace{-0.2in}
\end{figure*}
\vspace{-0.1in}
\section{Approximation Strategy}
\label{sec:approx}
\vspace{-0.05in}
The true \ged corresponds to the minimum distance across all possible node mappings (Def.~\ref{def:ged}). However, enumerating all such mappings is computationally infeasible due to its factorial complexity relative to graph size. To overcome this challenge, we approximate the \ged by evaluating a small subset of mappings (e.g., $15$) and selecting the minimum distance among them. These mappings are generated by programs derived from the LLM, as detailed in \S~\ref{sec: method}. Importantly, this approximated \ged serves as an upper bound to the true \ged, as it considers only a subset of all possible node mappings.
\vspace{-0.1in}
\subsection{Node Mappings through Bipartite Matching}
\label{sec:taskspecification}
\vspace{-0.05in}
The task of mapping nodes between two graphs can be approximated as \textit{Maximum Weight Bipartite Matching}.
\vspace{-0.05in}
\begin{definition}[Maximum Weight Bipartite Matching]
    \textit{Given a weighted bipartite graph $\CB(\CV,\CU,\CE,\CW)$ with node sets $\CV$ and $\CU$, and a weighted edge set $\CE:\CV\times\CU\rightarrow\mathbb{R}$ where $\CW:\CE\rightarrow\mathbb{R}$ assigns weights to edges, find a subset of edges $\CE^*\subseteq\CE$ that \textbf{(1)} induces a bijection between nodes in $\CV$ and nodes in $\CU$, and \textbf{(2)} maximizes the total weight of the mapped edges, i.e., $\sum_{e\in\CE^*}\CW(e)$. Here, $\CW(e)$ represents the weight of edge $e$.}
\end{definition}
\vspace{-0.1in}
Maximum weight bipartite matching can be solved optimally in polynomial time using the Hungarian algorithm~\cite{kuhn1955hungarian}. Additionally, several heuristics have been proposed, such as the Hopcroft–Karp Algorithm~\cite{hopcroft1973n}, the Neighbor-biased mapper~\cite{ctree}, or greedy selection of the highest-weight edges while maintaining the one-to-one mapping constraint. We use the notation $\pi(\CB)$ to denote the node mapping obtained from $\CB$.

To use maximum weight bipartite matching for approximating \ged, for given graphs $\CG_1(\CV_1,\CE_1,\CL_1)$ and $\CG_2(\CV_2,\CE_2,\CL_2)$, we construct a fully connected, weighted bipartite graph $\CB(\CV_1,\CV_2,\CE,\CW)$ where edge set $\CE=\{(v_1,v_2) \mid v_1 \in \CV_1, v_2 \in \CV_2\}$\footnote{Recall, we assume the smaller graph is padded with dummy nodes to ensure $|\CV_1| = |\CV_2|$.}. The weight of an edge $(v_1, v_2)$ is set based on some policy, which should ideally reflect the probability of $v_1$ being mapped to $v_2$ in the optimal \ged mapping. Maximum weight bipartite matching is then performed on $\CB$ using any standard algorithm, and the \ged is computed based on the mapping $\pi(\CB)$.

The quality of the mapping with respect to approximating \ged, therefore, rests on the edge weights in the bipartite graph. We will use an LLM to learn the policy, in the form of a program, with the following \textit{minimization objective}.
\begin{prob}[Weight Matrix Generation]
\label{prob:weight}
Given train set $\mathbb{T}=\{\langle\mathcal{G}_1, \mathcal{G}'_1\rangle,\cdots,\langle\mathcal{G}_n, \mathcal{G}'_n\rangle\}$, generate a program $P$ that takes as input each pair $\langle\mathcal{G}_t,\mathcal{G}'_t\rangle\in\mathbb{T}$ and outputs a corresponding weight matrix $\cW_t \in \mathbb{R}^{|\CV_t| \times |\CV'_t|}$ minimizing
\vspace{-0.15in}
\begin{equation}
\label{eq:gedp}
  \sum_{t=1}^n  \ged_{\pi(P)}(\CG_t, \CG'_t)
\end{equation}

\vspace{-0.19in}
Here, $\cW_t[i,j]$ denotes the weight of edge $(v_i,v_j)$ where $v_i\in\CV_t$ to node $v_j\in\CV'_t$. $\pi(P)$ denotes the mapping generated by maximum weight bipartite matching on the bipartite graph formed by $P$.
\end{prob}
\vspace{-0.1in}
\subsection{Budget-constrained Selection of Node Maps}
\label{sec:score}
\vspace{-0.05in}
Let $\CD=\{P_1,\cdots,P_m\}$ be the set of programs generated by the LLM and $\pi(P_i)$ denote the mapping produced by program $P_i\in\CD$. From Def.~\ref{def:ged}, we know $\ged_{\pi(P_i)}(\CG_1,\CG_2)\geq\ged(\CG_1,\CG_2)$, i.e., each program provides an \textit{upper bound} on the true \ged. The smaller the upper bound, the closer we are to the true \ged. Our goal is to select a subset $\CA^*\subseteq\CD$  of $b$ programs that minimize the cumulative upper bounds across all train graph pairs. $b\ll |\CD|$ denotes the maximum number of mappings we are allowed to evaluate. These $b$ programs will finally be used during inference for unseen graph pairs. Formally, this presents us with the following optimization problem.
\begin{prob}[Map Selection]
\label{prob:map}
\textit{Given programs $\CD=\{P_1,\cdots,P_m\}$, select $\CA^*$ such that:}
\vspace{-0.05in}
\begin{alignat}{2}
\label{eq:minA}
\CA^*&=\arg\min_{\forall \CA\subseteq\CD,\:|\CA|=b}\left\{\mathcal{J}(A)\right\}\\
\label{eq:J}
\mathcal{J}(\CA)&=\sum_{\langle\CG_1,\CG_2\rangle\in\mathbb{T}}\min_{P\in\CA}\left\{
\ged_{\pi(P)}(\CG_1,\CG_2)\right\}
\end{alignat}
$\mathcal{J}(\CA)$ \textit{quantifies the quality of the subset of mappings in $\CA$.}
\end{prob}
\begin{theorem}
Prob.~\ref{prob:map} is NP-hard.
\end{theorem}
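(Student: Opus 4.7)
The plan is to reduce from the classical NP-hard Set Cover problem. Given an instance $(U,\mathcal{S},k)$ with universe $U=\{u_1,\dots,u_n\}$ and subsets $\mathcal{S}=\{S_1,\dots,S_m\}\subseteq 2^U$, I will construct a Map Selection instance --- a training set $\mathbb{T}$, a program collection $\CD$, and budget $b=k$ --- such that $(U,\mathcal{S},k)$ admits a cover of size $k$ iff there exists $\CA\subseteq\CD$ with $|\CA|=k$ and $\mathcal{J}(\CA)=0$. Since $\mathcal{J}\ge 0$ always, this is the natural decision version of Prob.~\ref{prob:map}.

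First I would construct one training pair $\langle \CG_i,\CG'_i\rangle$ per element $u_i\in U$. Take $\CG_i=\CG'_i$ to be a small graph (for concreteness, a labeled path) whose nodes carry pairwise-distinct labels, with one distinguished label encoding the index $i$ so that no two training pairs are isomorphic. Because all node labels within a pair are distinct, the identity node mapping achieves $\ged_{\pi}(\CG_i,\CG'_i)=0$, while every non-identity mapping incurs at least one label mismatch and hence $\ged_{\pi}\ge 1$.

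Next, for each subset $S_j\in\mathcal{S}$ I would define a program $P_j$ that, on input $(\CG,\CG')$, first reads the identifier label to recover the training-pair index $i$, and then emits a weight matrix whose maximum-weight bipartite matching is the identity mapping if $u_i\in S_j$ and some fixed non-identity mapping (e.g., a single transposition) otherwise; on inputs outside $\mathbb{T}$ it may emit any default weight matrix. Each $P_j$ is a legitimate polynomial-time function from graph pairs to weight matrices, and by construction $\ged_{\pi(P_j)}(\CG_i,\CG'_i)=0$ precisely when $u_i\in S_j$ and is at least $1$ otherwise. Setting $b=k$ completes the reduction.

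Correctness follows directly from the definition of $\mathcal{J}$: we have $\mathcal{J}(\CA)=0$ iff for every training pair $i$ there exists $P_j\in\CA$ achieving the identity mapping on $\langle\CG_i,\CG'_i\rangle$, iff $\{S_j:P_j\in\CA\}$ covers $U$. Thus Set Cover yes-instances correspond exactly to Map Selection subsets of size $k$ with objective $0$, establishing NP-hardness. The main obstacle I expect is ensuring that each $P_j$ is a bona fide polynomial-time program --- not merely a lookup table over an abstract index set --- while still realizing the prescribed $n\times m$ cost pattern. Embedding the pair index directly as a node label, so that $P_j$ can decode $i$ from its input in polynomial time without any external state, is the device that resolves this difficulty.
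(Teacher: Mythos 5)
Your reduction is correct and takes essentially the same route as the paper: a reduction from Set Cover in which programs play the role of sets and training graph pairs the role of universe elements, so that a budget-$b$ subset attains the minimum value of $\mathcal{J}$ exactly when the corresponding sets cover the universe. The only difference is one of concreteness --- the paper argues over an abstract bipartite cost structure with weights $1$ and $1+\Delta$, whereas you explicitly realize the required cost pattern (cost $0$ versus $\geq 1$) with isomorphic, distinctly-labeled graph pairs and index-decoding programs, thereby exhibiting a fully instantiated instance of Prob.~\ref{prob:map}.
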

\vspace{-0.15in}
\begin{proof} The proposed optimization problem reduces to the \textit{Set Cover} problem~\cite{cormen}, rendering it NP-hard. For the formal proof, please refer to App.~\ref{app:setcover}.
\end{proof}
\vspace{-0.1in}

Owing to NP-hardness, finding the optimal subset of mappings $\CA^*$ is not feasible in polynomial time. We establish that $\CJ(\CA)$ is \textit{monotonic} and \textit{submodular} (refer App.~\ref{app:submodularity}). This enables us to use the greedy hill-climbing algorithm (Alg.~\ref{alg:greedy}) to select a sub-optimal but reasonable subset of programs, $\CA_{greedy}$.\\

\begin{algorithm}[b]
{\footnotesize
    \caption{The greedy approach}
    \label{alg:greedy}
    \begin{algorithmic}[1]
    \REQUIRE Train data $\mathbb{T}=\{T_1,\cdots,T_n\}$ where  $T_t=\langle\CG_t,\CG'_t\rangle$ is a pair of graphs, budget $b$.
    \ENSURE solution set $\CA_{greedy}$, $|\CA_{greedy}|=b$
        \STATE $\CA_{greedy} \leftarrow \emptyset$
        \WHILE{$size(\CA_{greedy})\leq b$  (within budget)}
            \STATE $P^*\leftarrow \arg\max_{P\in \CD\backslash \CA_{greedy}}\{\CJ(\CA_{greedy}\cup\{P\})-\CJ(\CA_{greedy})\}$
            \STATE $\CA_{greedy}\leftarrow \CA_{greedy}\cup \{P^*\}$ 
        \ENDWHILE
    \STATE \textbf{Return} $\CA_{greedy}$
    \end{algorithmic}}
\end{algorithm}
\vspace{-0.25in}
\section{\methodname: Proposed Methodology}
\label{sec: method}
\vspace{-0.05in}
In \S~\ref{sec:approx}, we decompose Prob.~\ref{prob:ged} into two subproblems: weight selection in a bipartite graph (Prob.~\ref{prob:weight}) and budget-constrained map selection (Prob.~\ref{prob:map}). Prob.~\ref{prob:map} is solved (approximately) using Alg.~\ref{alg:greedy}. Hence, to complete our approximation scheme, we need to solve Prob.~\ref{prob:weight}.

Fig.~\ref{fig:pipeline} presents the pipeline of \methodname. The process begins with an initial prompt that specifies a trivial program for weight selection, and the LLM is tasked with improving this program for \ged computation via bipartite matching (details in \S~\ref{sec:prompt}). Each newly generated program is verified for syntactic correctness and must terminate within a predefined time limit. If these criteria are met, the program is evaluated on the training set of graph pairs and added to the program pool along with its \textit{score}, which reflects its marginal contribution to $\CJ(\CA_{greedy})$. A new prompt is then constructed by sampling the highest-scoring programs from the current pool. The LLM refines these programs, generating new candidates to further enhance performance. These newly generated programs are evaluated and added to the pool following the same procedure. This iterative process continues until $\CJ(\CA_{greedy})$ converges, ensuring that improvements stabilize across iterations. The following sections detail each step of this process.

\vspace{-0.1in}
\subsection{Prompt Specification}
\label{sec:prompt}
\vspace{-0.05in}
The prompt is a computer program consisting of three distinct components: \textbf{(1)} the problem description, \textbf{(2)} the task specification, and \textbf{(3)} the top-$k$ programs generated so far based on a scoring function. A sample prompt is provided in Fig.~\ref{fig:prompt} in the Appendix. Here, $k$ is set to $2$.\looseness=-1

\textbf{Problem Description:} The problem description includes the definition of \ged, which is embedded as a comment within the program (refer to Fig.~\ref{fig:prompt}).

\textbf{Task Specification:} The LLM's task is defined through a comment specifying the inputs it should expect and the required output. The output is a weight matrix $\cW \in \mathbb{R}^{|\CV_1| \times |\CV_2|}$ for the bipartite graph, where $\cW[i,j]$ quantifies the strength of mapping node $v_i \in \CV_1$ to node $v_j \in \CV_2$ in the context of \ged computation. The input includes the graph pair represented by their adjacency matrices and an initial weight matrix $\cW^0 \in \mathbb{R}^{|\CV_1| \times |\CV_2|}$ with the same dimensions and semantics as the output. During execution, the input weight matrix $\cW^0$ is initialized such that $\cW^0[i,j] = 1$ if the corresponding nodes share the same label, i.e., $\CL_1(v_i) = \CL_2(v_j)$ for $v_i \in \CV_1$ and $v_j \in \CV_2$, and $\cW^0[i,j] = 0$ otherwise. Additionally, the header of the function that the LLM needs to generate is explicitly provided.\looseness=-1

\textbf{Top-$k$ Programs:} The initial prompt includes a trivial program where $\forall v_i \in \CV_1, v_j \in \CV_2, \cW[i,j] = 0$. In subsequent iterations, $k$ high-scoring programs are sampled for inclusion in the prompt, where $k$ is a hyper-parameter. The scoring and sampling methodology are described in \S~\ref{sec:prompt_tuning}.
\vspace{-0.1in}
\subsection{Prompt Tuning}
\label{sec:prompt_tuning}
\vspace{-0.05in}
{\bf Filter:} After a program is generated, it undergoes a filtering step to verify that it executes and terminates on training graph pairs within a predefined time limit. Programs that fail this filter are discarded. For those that pass, we compute their score and add them to our program database $\CD$. 

\textbf{Score computation:} 
In Prob.~\ref{prob:map}, we take the minimum \ged across all selected mappings in the answer set. A program's utility, therefore, depends on how it complements other programs in the answer set.
 Hence, we define its score as the \textit{marginal contribution} to the objective function $\CJ(\CA_{\text{greedy}})$. Specifically, we execute Alg.~\ref{alg:greedy} on the current pool of programs, where $\CA = \{P_1, \ldots, P_i\}$ represents the subset of programs selected up to iteration $i$. If program $P$ is added in the $i+1$-th iteration due to providing the highest marginal contribution, its score is computed as:  
\vspace{-0.05in}
\begin{equation}
\text{score}(P) = \CJ(\CA \cup \{P\}) - \CJ(\CA),
\label{eq:marginal_gain_eq}
\end{equation}

\vspace{-0.15in}
{\bf Evolutionary program selection:} The next stage involves selecting programs from the pool to be included in the next prompt. We use the evolutionary algorithm proposed in Funsearch ~\cite{romera2024mathematical} for evolving our programs generated by LLM. Since the programs evolve through mutations introduced by the LLM, the selection mechanism optimizes two distinct objectives. First, the sampled programs should have high scores. Second, the sampled programs should have smaller length improving the interpretability of generated programs.

The evolutionary algorithm follows the \textit{islands model}~\cite{island}. Specifically, the population of existing programs is partitioned into $s$ islands, where $s$ is a hyperparameter. Initially, all islands are empty. When a program is added to the pool, it is randomly assigned to an island. Subsequently, to decide which $k$ programs are included in the prompt, we randomly choose an island. Similar to Funsearch, the programs within each island are then split into clusters depending on score. After selecting the island, clusters are selected based on softmax distribution on score. Within a clusters, the programs are selected based on length (smaller is better). Hence, the program selection mechanism for the next prompt favors higher scores and shorter lengths. More details of the process can be found in ~\citet{romera2024mathematical}. The LLM is then tasked with further improving these programs.

With this design, each island evolves independently. To enable cross-fertilization among islands, we periodically discard half of the islands which have the lowest score. The discarded islands are replaced by  iterating over each of the surviving islands, and selecting its best program to seed the replacement population.
\vspace{-0.1in}
\subsection{Training and Inference}
\label{sec:training}
\vspace{-0.05in}
\textbf{Training:} As illustrated in Fig.~\ref{fig:pipeline}, each iteration involves generating a program, scoring it, and assigning it to an island before constructing and executing a new prompt. The quality of the program pool is measured by $\CJ(\CA_{greedy})$, serving as an analog to a loss function in our framework. This iterative process continues until $\CJ(\CA_{greedy})$ converges, defined as its improvement over the last $i$ iterations falling below a predefined threshold, akin to the \textit{patience} parameter in neural model training. 

Overall, \methodname seeks to minimize the upper bound of \ged. With this strategy, we bypass the need for ground-truth \ged data, a key bottleneck in training neural approaches. This unique design is not feasible in neural pipelines since the prediction can err on either side of the true distance. 

\textbf{Inference:} During inference, we directly return $\CJ(\CA_{greedy})$ for the given input graph pair. Note that since the output of the training phase is executable code, inference is CPU-bound, enabling it to operate in low-resource environments.

\vspace{-0.1in}
\section{Experiments}
\label{sec: experiments}
\vspace{-0.05in}
In this section, we benchmark \methodname and establish that:
\vspace{-0.15in}
\begin{itemize}
\item \textbf{Approximation Error:} \methodname achieves low approximation errors and consistently ranks among the top algorithms across all six datasets. Notably, unlike neural approximators, it achieves this performance without relying on extensive NP-hard ground-truth \ged training data.
   \vspace{-0.23in}
\item \textbf{Foundational heuristics:} \methodname breaks new ground by generating heuristics that generalize across diverse datasets, including those featuring unseen node labels and varying graph sizes. This exceptional adaptability sets \methodname apart, as no existing neural \ged approximators have demonstrated such versatility.
\end{itemize}
\vspace{-0.15in}
The codebase of \methodname and the programs generated for the various datasets are available at \url{https://github.com/idea-iitd/Grail}.  
\begin{table}[t!]
\vspace{-0.1in}
\centering
\caption{Datasets used for benchmarking \methodname.}
\label{tab:datasets}
\setlength{\tabcolsep}{4pt} 
\renewcommand{\arraystretch}{1.1} 
\scalebox{0.8}{
\begin{tabular}{lrrrrl}
\toprule
\textbf{Name} & \textbf{\# Graphs} & \textbf{Avg $|V|$} & \textbf{Avg $|E|$} & \textbf{\# labels} & \textbf{Domain} \\ \midrule 
ogbg-molhiv   & 39650              & 24                 & 52                 & 119                & Molecules         \\ 
ogbg-molpcba  & 436313             & 26                 & 56                 & 119                & Molecules        \\ 
ogbg-code2    & 139468             & 37                 & 72                 & 97                 & Software        \\ 
AIDS          & 700                & 9                  & 9                  & 29                 & Molecules         \\ 
Linux       & 1000               & 8                  & 7                  & Unlabeled                 & Software        \\ 
IMDB          & 1500               & 13                 & 65                 & Unlabeled                 & Movies          \\ 
ogbg-ppa   & 39650              & 243.4                 & 2226.1                 & Unlabeled                & Protein         \\
\bottomrule
\end{tabular}}
\vspace{-0.2in}
\end{table}

\begin{table*}[ht!]
\vspace{-0.1in}
\centering
\caption{\textbf{RMSE Comparison:} The top-3 lowest RMSEs per dataset are highlighted in green shades, with darker shades denoting better RMSE. An asterisk (*) marks the better value when additional decimal places resolve ties after rounding to two decimal places.}
\label{tab:rmse}
\setlength{\tabcolsep}{7pt}
\scalebox{0.85}{
\begin{tabular}{@{}p{1.4 cm}l|>{\centering\arraybackslash}p{1.2cm}>{\centering\arraybackslash}p{1.2cm}>{\centering\arraybackslash}p{1.3cm}>{\centering\arraybackslash}p{1.9 cm}>{\centering\arraybackslash}p{1.8cm}>{\centering\arraybackslash}p{2.2cm}|>{\centering\arraybackslash}p{1.6cm}@{}}
\toprule
\textbf{Type} & \textbf{Methods} & \textbf{AIDS} & \textbf{Linux} & \textbf{IMDB} & \textbf{ogbg-molhiv} & \textbf{ogbg-code2} & \textbf{ogbg-molpcba} & \textbf{Avg. Rank} \\ 
\midrule
\textbf{LLM} & \methodname   & \cellcolor{shade1}0.57 & \cellcolor{shade2}0.13 & \cellcolor{shade2}0.55 & \cellcolor{shade2}2.96* & \cellcolor{shade2}4.22 & \cellcolor{shade3}3.18 & 2 \\ 
& \methodname-\textsc{Mix}   & \cellcolor{shade3}0.64 & \cellcolor{shade1}0.11 & \cellcolor{shade1}0.53 & \cellcolor{shade2}2.96 & \cellcolor{shade1}4.10 & 3.40 & 2.17 \\ 
\midrule
\multirow{4}{*}{\textbf{Neural}} 
    & \greed        & \cellcolor{shade2}0.61 & 0.41 & 4.8 & 3.02 & \cellcolor{shade3}5.52 & \cellcolor{shade1}2.48 & 3.5 \\ 
    & \gedgnn       & 0.92 & 0.29 & \cellcolor{shade3}4.43 & \cellcolor{shade1}1.75 & 16.68 & 4.58 & 5 \\ 
    & ERIC        & 1.08 & 0.30 & 42.44 & 3.56 & 17.55 & \cellcolor{shade2}2.79 & 6.5 \\ 
    & H$^2$MN      & 1.14 & 0.60 & 57.8 & 12.01 & 11.96 & 5.50 & 8.33 \\ 
    & GRAPHEDX     & 0.78 & 0.27 & 32.36 & 14.14 & 21.46 & 10.01  & 8.33 \\
\midrule
\multirow{6}{*}{\textbf{Non Neural}} 
    & \textsc{Adj-Ip}       & 0.85 & 0.50 & 42.18 & 10.21 & 14.94 & 8.06 & 7.33 \\ 
    & \textsc{Node}        & 2.71 & 1.24 & 61.03 & 4.97 & 8.34 & 4.94 & 8.17 \\ 
    & \textsc{Lp-Ged-F2}    & 1.96 & \cellcolor{shade3}0.23 & 55.26 & 12.86 & 16.03 & 10.30 & 8.83 \\ 
    & \branch & 3.31 & 2.45 & 7.36 & 9.86 & 12.64 & 11.31 & 9.33 \\
    & \textsc{Compact-Mip}  & 2.69 & 0.44 & 65.88 & 10.88 & 19.46 & 8.81 & 10 \\  
    & \textsc{Ipfp}         & 4.18 & 2.29 & 69.45 & 13.69 & 15.19 & 10.02 & 11.5 \\ 
\bottomrule
\end{tabular}}
\vspace{-0.1in}
\end{table*}
\begin{table*}[ht!]
\vspace{-0.1in}
\centering
\caption{\textbf{EMR Comparison}: The top-$3$ highest EMRs per dataset are highlighted in green shades, with darker shades denoting better EMRs. The EMR values for \methodname are in App. (Table ~\ref{tab:EMR_genie}). We omit them here as \methodname-\textsc{Mix} performs similarly across datasets. For a focused comparison, we only include the top-3 baselines from Table~\ref{tab:rmse}, since the remaining do not provide competitive performance. For ties after rounding to two decimals, an asterisk (*) marks the higher value. Values in (0.99, 1) are shown as $\approx1$.} 
\label{tab:EMR_main}
\setlength{\tabcolsep}{9pt}
\scalebox{0.85}{
\begin{tabular}{@{}p{1.8cm}|l>{\centering\arraybackslash}p{1.7cm}>{\centering\arraybackslash}p{1.7cm}>{\centering\arraybackslash}p{2.0cm}>{\centering\arraybackslash}p{2.0cm}>{\centering\arraybackslash}p{2.2cm}|>{\centering\arraybackslash}p{1.8cm}>{\centering\arraybackslash}p{1.4cm}@{}}
\toprule
 \textbf{Methods} & \textbf{AIDS} & \textbf{Linux} & \textbf{IMDB} & \textbf{ogbg-molhiv} & \textbf{ogbg-code2} & \textbf{ogbg-molpcba} & \textbf{Avg. Rank} \\ 
\midrule
\methodname-\textsc{Mix}  & \cellcolor{shade1}0.80 & \cellcolor{shade1}$\approx1$ & \cellcolor{shade1}$\approx1$ & 0.20 & \cellcolor{shade1}0.12 & \cellcolor{shade3}0.12 &  1.83 \\
    \greed        & \cellcolor{shade2}0.58 & 0.79 & \cellcolor{shade2}0.17 & \cellcolor{shade2}0.23 & \cellcolor{shade2}0.09 & \cellcolor{shade1}0.21 & 2.17 \\ 
         ERIC        & \cellcolor{shade3}0.37 & \cellcolor{shade2}0.92 & \cellcolor{shade3}0.08 & \cellcolor{shade3}0.21 & 0.01 & \cellcolor{shade2}0.18 & 2.83\\ 
     \gedgnn       & 0.35 & \cellcolor{shade3}0.85 & 0.07 & \cellcolor{shade1}0.57 & \cellcolor{shade3} 0.01* & 0.09 & 3.17 \\ 
\bottomrule
\end{tabular}}
\vspace{-0.2in}
\end{table*}

\vspace{-0.1in}
\subsection{Experiment Setup}
\label{sec:setup}
\vspace{-0.05in}
Gemini-1.5 Pro has been used for all experiments. Further details of the software and hardware environments and hyper-parameters used for \methodname are listed in App.~\ref{app:setup}.\\
\textbf{Datasets: }Table~\ref{tab:datasets} summarizes the datasets used in this study. A detailed description of the data semantics is included in App.~\ref{app:datasets}. While AIDS, Linux and IMDB are obtained from \citet{tudataset}, the other four datasets are made available by \citet{ogbg}.\\
\textbf{Benchmark Algorithms:} The recent baselines are listed in Table~\ref{tab:gaps}. From this set, we benchmark \methodname against \greed~\cite{ranjan2022greed}, \gedgnn~\cite{piao2023computing}, \eric~\cite{eric}, \graphedx~\cite{graphedx} and \hmn~\cite{h2mn}. We omit \simgnn, \graphotsim, \gmn, \graphsim, \tagsim and \genn, since they have been outperformed by the considered baselines of \greed, \graphedx, \gedgnn and \eric. 

Among non-neural baselines we include the best-performing heuristics from the benchmarking study in \citet{blumenthal2020comparing}: namely, \textsc{Lp-Ged-F2}, \textsc{Compact-Mip}, \textsc{Adj-IP}, \textsc{Branch-Tight}, \textsc{Node} and \textsc{Ipfp}.

\textbf{\methodname-\textsc{Mix}} is a variant of \methodname trained on a mixture of graph pairs from multiple datasets, while maintaining the same training set size as  \methodname. The programs discovered by \gmix are used for inference across all datasets to assess whether a single training instance can generalize across domains,  eliminating dataset-specific training.


 \begin{table*}[htbp]
\centering
\vspace{-0.1in}
\caption{\textbf{Intra-Domain Generalizability:} RMSE of the best neural method, \greed, and \methodname. Off-diagonal entries represent cross-dataset performance. NA symbolizes that it's not possible to train a single model covering the train-test combination.}
\label{tab:intra-domain}
\setlength{\tabcolsep}{12pt} 
\scalebox{0.85}{
\begin{tabular}{c|cc|cc|cc}
\toprule
\multicolumn{1}{c|}{} & \multicolumn{2}{c|}{\textbf{AIDS}} & \multicolumn{2}{c|}{\textbf{ogbg-molhiv}} & \multicolumn{2}{c}{\textbf{ogbg-molpcba}} \\
\textbf{Test set \textbackslash Train Set} & \greed & \methodname & \greed & \methodname & \greed & \methodname \\
\midrule
\textbf{AIDS} & 0.61 & \cellcolor{shade1}{0.57} & 5.71 & \cellcolor{shade1}0.64 & 4.58 & \cellcolor{shade1}0.59 \\
\textbf{ogbg-molhiv} & NA & \cellcolor{shade1}3.02 & 3.02 & \cellcolor{shade1}{2.96} & 3.86 & \cellcolor{shade1}2.89 \\
\textbf{ogbg-molpcba} & NA & \cellcolor{shade1}3.59 & \cellcolor{shade1}2.16 & 3.54 & \cellcolor{shade1}{2.48} & 3.18 \\
\bottomrule
\end{tabular}}
\vspace{-0.2in}
\end{table*}
\label{par:train_data}
\textbf{Train-Validation-Test Split:} To construct the test set for a particular dataset, we select 1000 graph pairs uniformly at random and compute their true \ged. The procedure for computing the ground truth \ged is discussed in App.~\ref{app:groundtruth}. The training and validation sets depend on the algorithm.
\vspace{-0.15in}
\begin{itemize}
    \item {\bf Neural Algorithms:} All neural approaches are trained on 10,000 graph pairs per dataset. This training time exceeds 15 days for certain datasets (see Fig.~\ref{fig:train_inference_time_comparison}a).
    \vspace{-0.1in}
    \item {\bf \methodname and \gmix:} \methodname is trained with only 1,000 graph pairs per dataset. As discussed already, \methodname does not require ground-truth \ged. In \gmix, we choose 
 $166$ graph pairs from each of the datasets listed in Table~\ref{tab:datasets} except ogbg-ppa. 
 Both \methodname and \gmix do not use a validation set.
    \vspace{-0.1in}
    \item \textbf{Non-Neural Baselines:} These unsupervised algorithms do not require any training or validation datasets.
\end{itemize}
\vspace{-0.1in}


\begin{table*}[htbp]
\centering
\caption{\textbf{Inter-Domain Generalizability: } Generalization of \methodname across domains, dataset sizes, and node label distributions by training on one dataset and measuring RMSE on others. Off-diagonal entries represent cross-dataset performance. The best two results for each test set(row) have been highlighted in shades of green, with darker being better.}
\label{tab:transfer}
\setlength{\tabcolsep}{12pt}
\scalebox{0.85}{
\begin{tabular}{c|cccccc}
\toprule
\textbf{Test set \textbackslash Train set} & \textbf{AIDS} & \textbf{IMDB} & \textbf{Linux} & \textbf{ogbg-molhiv} & \textbf{ogbg-code2} & \textbf{ogbg-molpcba} \\
\midrule
\textbf{AIDS}  & \cellcolor{shade1}0.57 & 0.63 & 0.65 & 0.64 & 0.62 & \cellcolor{shade3}0.59 \\
\textbf{IMDB} & 0.88 & \cellcolor{shade1}0.55 & 0.88 & 0.78 & \cellcolor{shade3}0.74 & 0.87 \\
\textbf{Linux} &0.18 & 0.22 & \cellcolor{shade1}0.13 & 0.24 & \cellcolor{shade3}0.16 & 0.24 \\
\textbf{ogbg-molhiv} & 3.02 & \cellcolor{shade3}2.93 & 3.08 & 2.96 & 2.96 & \cellcolor{shade1}2.89 \\
\textbf{ogbg-code2} & 4.44 & 4.32 & 4.74 & \cellcolor{shade1}4.07 & \cellcolor{shade3}4.22 & 4.5 \\
\textbf{ogbg-molpcba} & 3.59 & 3.63 & 3.61 & \cellcolor{shade3}3.54 &  3.64 & \cellcolor{shade1}3.18\\
\bottomrule
\end{tabular}}
\vspace{-0.2in}
\end{table*}

\textbf{Metrics:} We employ two metrics: Root Mean Squared Error (\textit{RMSE}) and Exact Match Ratio (\textit{EMR}). 
 \textit{EMR} quantifies the proportion of test graph pairs for which the predicted \ged exactly matches the true \ged. (See  App.~\ref{app:metrics} for details.)
\begin{figure*}[btp!]
   \centering
   \subfloat[ogbg-ppa\label{fig:ogbg-ppa}]{
    \includegraphics[width=2.2in]{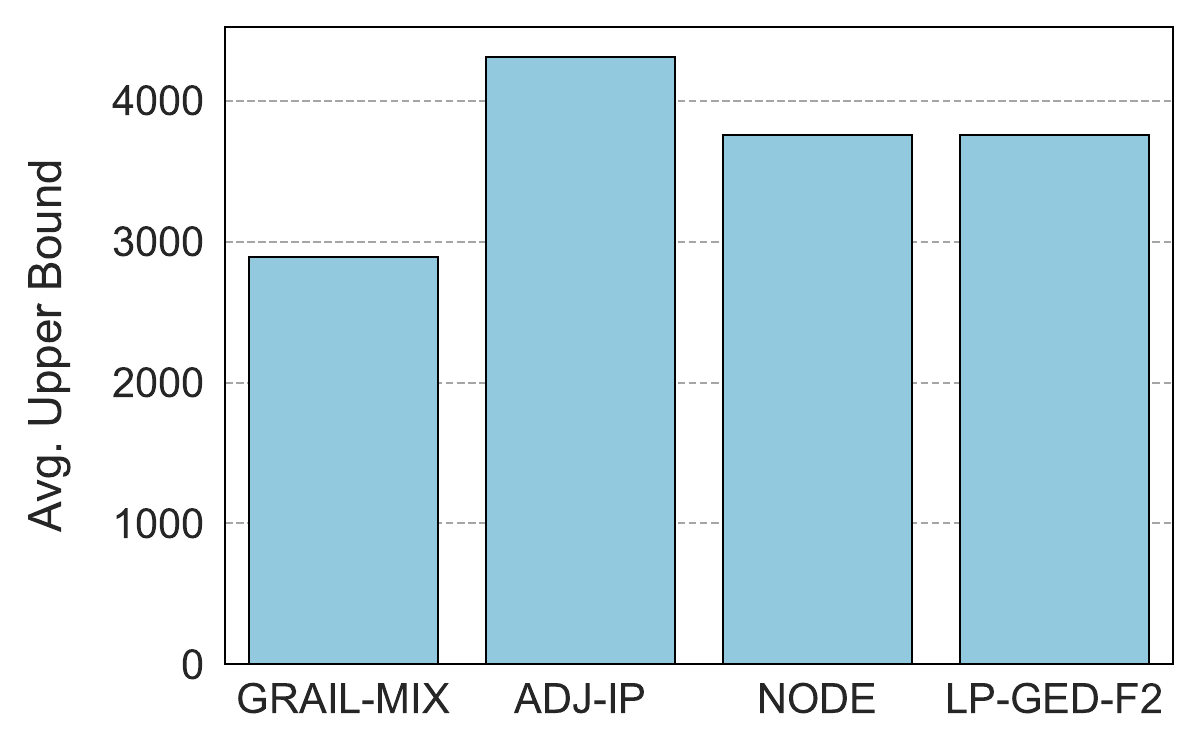}}\hfill
    \subfloat[AIDS \label{fig:avg_ub_aids}]{
    \includegraphics[width=1.65in]{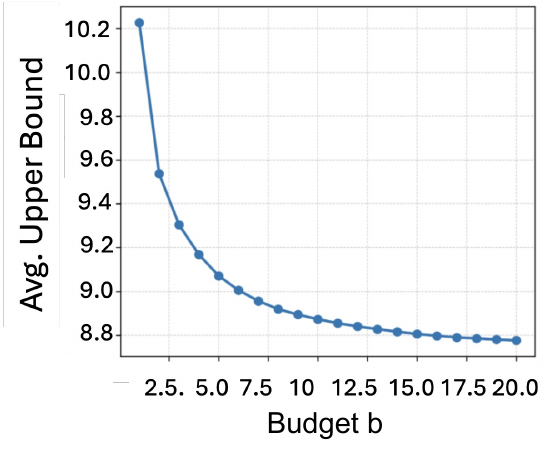}}\hfill
    \subfloat[AIDS \label{fig:submodular}]{
    \includegraphics[width=1.5in]{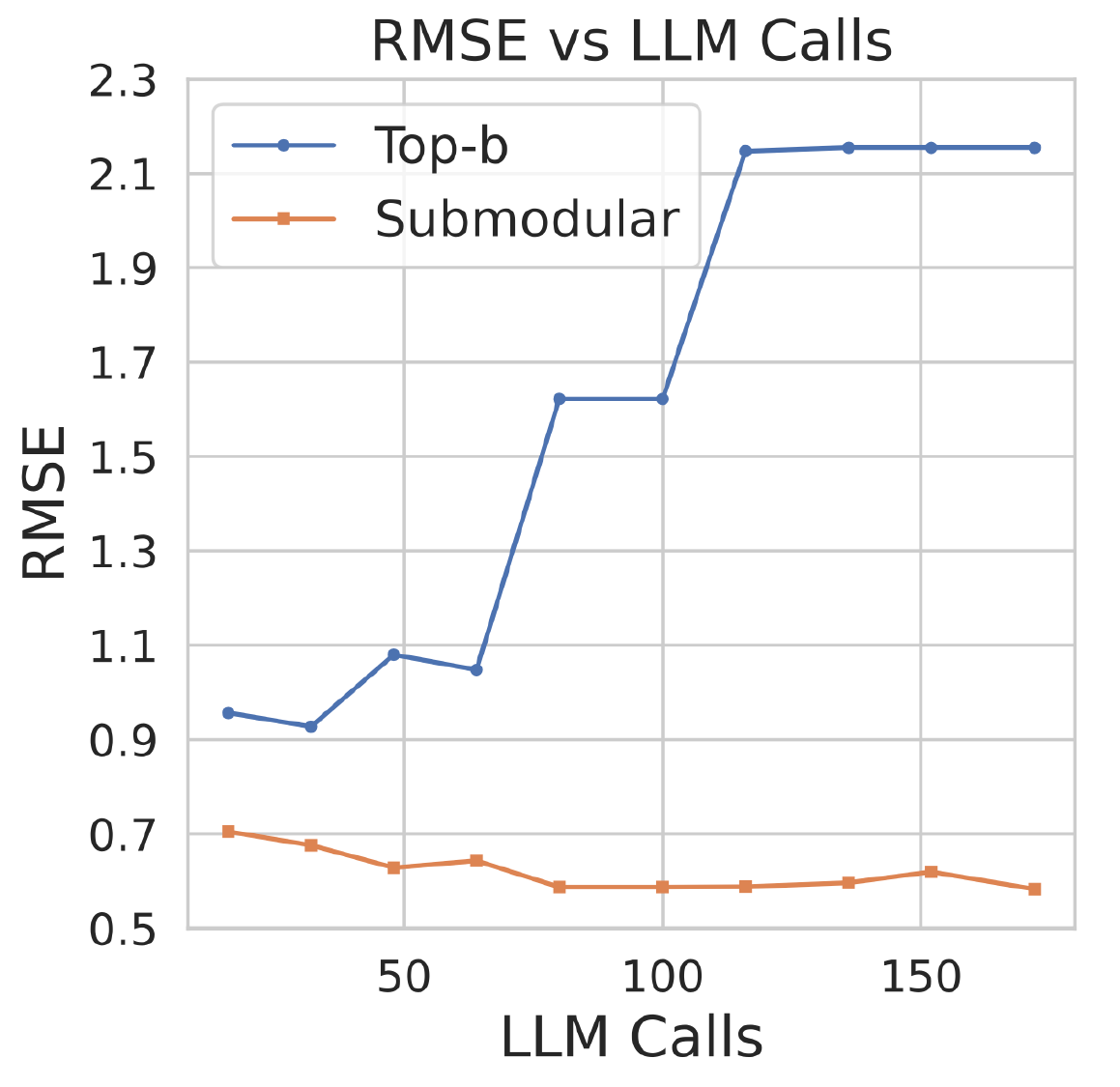}}
    \vspace{-0.1in}
    \caption{(a) \textbf{\gmix at scale: }Performance of \methodname-\textsc{Mix} on the ogbg-ppa dataset when compared to the top-3 non-neural baselines on the basis of average rank in Table ~\ref{tab:rmse}. (b) Impact of function budget on upper bound. (c) Impact of greedy submodular optimization on performance on test set.}
    \vspace{-0.2in}
\end{figure*} 
\vspace{-0.1in}
\subsection{Empirical Analysis of Approximation Errors}
\vspace{-0.05in}
 Tables~\ref{tab:rmse} and \ref{tab:EMR_main} benchmark \methodname in terms of RMSE and EMR. Several important observations emerge.

First, \methodname and \methodname-\textsc{Mix} comprehensively outperform the baselines despite not using any ground truth for training. This is a critical advantage as it saves time in expensive ground truth computation (Refer to Fig.~\ref{fig:train_inference_time_comparison}a). The improvement is the highest in the IMDB dataset, which we specifically analyze in \S~\ref{sec:interpretability}

Second, the efficacy of \methodname-\textsc{Mix} across datasets demonstrates that the discovered programs are universally applicable on multiple datasets and can be called as \textit{foundation functions}. These foundation functions eliminate the need for dataset-specific training.

Third, \gmix outperforms \methodname in three datasets, indicating positive cross-dataset knowledge transfer.
\vspace{-0.1in}
\subsection{Generalizability} 
\vspace{-0.1in}
An intrinsic requirement of all machine learning methods is that the training and test data are sampled from the same distribution. Thus, the neural baselines depend on training data tailored to the test dataset, limiting their ability to transfer knowledge due to reliance on dataset-specific features. 
In contrast, \methodname learns symbolic logical rules in the form of programs, facilitating out-of-domain and out-of-distribution generalization. We now evaluate this capability.\\
\textbf{Intra-domain: }Neural models are limited by feature dimensionality, making zero-shot generalization infeasible. However, for domains such as chemical compounds, a uniform feature space allows training a single model. We retrain the best neural baseline, \greed, and compare its intra-domain generalizability with \methodname in Table~\ref{tab:intra-domain}. Note that the AIDS dataset has a smaller feature dimension than the ogbg datasets, making it impossible to derive a common feature space. \methodname generalizes well across all train-test dataset pairs, while \greed struggles except for ogbg-molhiv and ogbg-molpcba. This is because both datasets are adopted from the same parent dataset MoleculeNet~\cite{wu2018moleculenet, hu2020open} with similar topological features (Table~\ref{tab:datasets}). \\
\textbf{Inter-domain: }
Table~\ref{tab:transfer} showcases the ability of the functions discovered by dataset-specific training of \methodname to generalize across other datasets. We do not observe a significant increase in RMSE on the off-diagonal entries, which showcases positive knowledge transfer and an ability not seen in neural approximators.\\
\textbf{Generalization to graph size:}  
In this experiment, we evaluate \gmix on the ogbg-ppa dataset, which has been omitted from prior benchmarking due to the large size of its graphs (See Table~\ref{tab:datasets}). Given the computational infeasibility of ground-truth \ged computation for these graphs, neural approximators cannot be trained on this dataset. To assess the generalization capability of \gmix, we compare the upper bound provided by its programs against the top-3 non-neural heuristics (based on Table~\ref{tab:rmse}). We observe that \gmix provides $30\%$ to $45\%$ tighter upper bounds. Fig.~\ref{fig:rmse_vs_graph_size} in the appendix further substantiates that \name generalizes better to large graphs than neural baselines.


\vspace{-0.1in}
\subsection{Ablation Study and Parameters}
\label{sec:ablation}
\vspace{-0.05in}
\textbf{Impact of budget $b$:} For the true \ged, all possible mappings (factorial in the graph size) must be considered. Instead, \methodname restricts this to $b$ mappings, where each mapping is generated by a program. 
In Fig.~\ref{fig:avg_ub_aids}, we plot how $b$ affects the upper bound. As shown, the upper bound converges at $\approx 15$ functions. Similar trends are observed in other datasets (see Fig.~\ref{fig:avg_upper_bound}).

\textbf{Impact of Submodularity:} What happens if, instead of selecting the top-$b$ functions using greedy submodular optimization, we evolve and score functions individually based on their upper bounds (Eq.~\ref{eq:gedp}) and select the top-$b$ solely based on this criterion? Fig.~\ref{fig:submodular} illustrates the impact on RMSE across training iterations (LLM calls). While selecting the top-$b$ functions through submodular optimization shows a clear trend of decreasing RMSE on the test set, independently choosing the top-$b$ functions based on individual scores results in significantly higher RMSE, with a progressive worsening trend indicative of overfitting to the training set. This result underscores the importance of submodularity in selecting functions that complement one another and perform well collectively (See Fig.~\ref{fig:topk_vs_greedy_aids} for additional metrics).

\vspace{-0.15in}
\subsection{Interpretability: Case Study on IMDB} 
\label{sec:casestudy}
\vspace{-0.05in}
To shed light on the superior performance of \name over GNN-based neural approximators, we analyze a graph pair from the IMDB dataset, where \name shows the highest improvement over all baselines (Table \ref{tab:rmse}). Figures \ref{fig:casestudy_graph} and \ref{fig:casestudy_heatmap} in the Appendix illustrate the graph structures, the edits made by \name and its closest competitor \gedgnn, and the node similarity matrices generated by these algorithms. The program discovered by \gmix, shown in Fig.~\ref{fig:casestudy_program}, achieves the ground truth GED of 4, while \gedgnn predicts a GED of 20. This program assigns node similarity scores based on degree similarity and that of their neighbors. Since IMDB is unlabeled, feature similarity does not influence the results.

Examining \gedgnn's similarity matrix reveals a different score distribution compared to \name. For instance, node 1 in Graph 1 has the second-highest similarity to nodes 2 and 9 in \gmix, but \gedgnn assigns low similarity to these nodes, favoring nodes 0 and 6 instead. \gmix's decision aligns with the similarity in their degrees (degree of 9 for node 1 in Graph 1 versus 7 for nodes 2 and 9 in Graph 2). In contrast, \gedgnn, as a neural network, operates as a black box. We hypothesize that the poor performance of \gedgnn and other GNN-based algorithms in IMDB is due to the dataset's unlabeled nature and high density, leading to oversquashing.~\cite{oversquashing}.
\label{sec:interpretability}

\vspace{-0.1in}
\section{ Conclusions and Future Directions}
\label{sec: conclusion}
\vspace{-0.05in}
This paper introduced a new paradigm of computing \ged by leveraging LLMs to autonomously generate programs. Unlike traditional methods that rely on neural networks and require computationally expensive, NP-hard ground truth data, our method employs a self-evolutionary strategy to discover programs without any ground truth data. Remarkably, these programs not only surpass state-of-the-art methods on average but are also interpretable and demonstrate strong transferability across various datasets and domains. While our approach is demonstrated on GED computation, we believe it is generalizable to other combinatorial problems with similar constraints, both within and beyond graph-related tasks. An interesting direction for future work is to critically analyze the programs discovered by our method with domain experts and to develop mechanisms that facilitate closer cooperation between human and LLM agents.

\clearpage
\section*{Acknowledgements}
\vspace{-0.1in}
The authors extend our sincere gratitude to Alexander Novikov (Research scientist, Google DeepMind) for their thorough review of the initial draft of this work. Their insightful comments and constructive feedback were instrumental in refining the manuscript and enhancing its clarity.

\section*{Statement of Impact}
This work introduces a novel approach to Graph Edit Distance (\ged) computation by leveraging large language models (LLMs) and evolutionary algorithms to generate interpretable programs for \ged approximation. Unlike existing methods, our approach prioritizes transparency, interpretability, and cross-domain generalization while achieving state-of-the-art performance in approximation accuracy.

The societal implications of this work are significant. By addressing key limitations of neural approaches—such as their reliance on costly ground truth data, lack of interpretability, and domain-specific retraining—our method has the potential to make graph similarity computation more accessible and efficient across a variety of applications, including bioinformatics, social network analysis, and cheminformatics. Moreover, the transparency of program-based solutions could foster trust and reliability in critical domains where understanding the computation process is essential, such as healthcare or legal systems.

We do not foresee any ethical concerns arising out of our work.

\bibliographystyle{icml2025}
\bibliography{icml}

\begin{thebibliography}{41}
\providecommand{\natexlab}[1]{#1}
\providecommand{\url}[1]{\texttt{#1}}
\expandafter\ifx\csname urlstyle\endcsname\relax
  \providecommand{\doi}[1]{doi: #1}\else
  \providecommand{\doi}{doi: \begingroup \urlstyle{rm}\Url}\fi

\bibitem[Bai \& Zhao(2021)Bai and Zhao]{tagsim}
Bai, J. and Zhao, P.
\newblock Tagsim: type-aware graph similarity learning and computation.
\newblock \emph{Proc. VLDB Endow.}, 15\penalty0 (2):\penalty0 335–347, October 2021.
\newblock ISSN 2150-8097.
\newblock \doi{10.14778/3489496.3489513}.
\newblock URL \url{https://doi.org/10.14778/3489496.3489513}.

\bibitem[Bai et~al.(2019)Bai, Ding, Bian, Chen, Sun, and Wang]{simgnn}
Bai, Y., Ding, H., Bian, S., Chen, T., Sun, Y., and Wang, W.
\newblock Simgnn: A neural network approach to fast graph similarity computation.
\newblock In \emph{WSDM}, WSDM '19, pp.\  384–392, 2019.

\bibitem[Bai et~al.(2020)Bai, Ding, Gu, Sun, and Wang]{graphsim}
Bai, Y., Ding, H., Gu, K., Sun, Y., and Wang, W.
\newblock Learning-based efficient graph similarity computation via multi-scale convolutional set matching.
\newblock \emph{AAAI}, pp.\  3219--3226, Apr. 2020.

\bibitem[Blumenthal(2019)]{gedthesis}
Blumenthal, D.~B.
\newblock New techniques for graph edit distance computation.
\newblock \emph{CoRR}, abs/1908.00265, 2019.
\newblock URL \url{http://arxiv.org/abs/1908.00265}.

\bibitem[Blumenthal \& Gamper(2018)Blumenthal and Gamper]{BRANCH_TIGHT}
Blumenthal, D.~B. and Gamper, J.
\newblock Improved lower bounds for graph edit distance.
\newblock \emph{IEEE Transactions on Knowledge and Data Engineering}, 30\penalty0 (3):\penalty0 503--516, 2018.
\newblock \doi{10.1109/TKDE.2017.2772243}.

\bibitem[Blumenthal \& Gamper(2020)Blumenthal and Gamper]{COMPACT_MIP}
Blumenthal, D.~B. and Gamper, J.
\newblock On the exact computation of the graph edit distance.
\newblock \emph{Pattern Recogn. Lett.}, 134\penalty0 (C):\penalty0 46–57, June 2020.
\newblock ISSN 0167-8655.
\newblock \doi{10.1016/j.patrec.2018.05.002}.
\newblock URL \url{https://doi.org/10.1016/j.patrec.2018.05.002}.

\bibitem[Blumenthal et~al.(2020)Blumenthal, Boria, Gamper, Bougleux, and Brun]{blumenthal2020comparing}
Blumenthal, D.~B., Boria, N., Gamper, J., Bougleux, S., and Brun, L.
\newblock Comparing heuristics for graph edit distance computation.
\newblock \emph{The VLDB journal}, 29\penalty0 (1):\penalty0 419--458, 2020.

\bibitem[Bommakanti et~al.(2024{\natexlab{a}})Bommakanti, Vonteri, Ranu, and Karras]{eugene}
Bommakanti, A., Vonteri, H.~R., Ranu, S., and Karras, P.
\newblock Eugene: Explainable unsupervised approximation of graph edit distance, 2024{\natexlab{a}}.
\newblock URL \url{https://arxiv.org/abs/2402.05885}.

\bibitem[Bommakanti et~al.(2024{\natexlab{b}})Bommakanti, Vonteri, Skitsas, Ranu, Mottin, and Karras]{fugal}
Bommakanti, A., Vonteri, H.~R., Skitsas, K., Ranu, S., Mottin, D., and Karras, P.
\newblock Fugal: Feature-fortified unrestricted graph alignment.
\newblock In \emph{The Thirty-eighth Annual Conference on Neural Information Processing Systems}, 2024{\natexlab{b}}.

\bibitem[Chen et~al.(2019)Chen, Peng, Han, Cai, and Cai]{chen2019hogmmnc}
Chen, J., Peng, H., Han, G., Cai, H., and Cai, J.
\newblock Hogmmnc: a higher order graph matching with multiple network constraints model for gene--drug regulatory modules identification.
\newblock \emph{Bioinformatics}, 35\penalty0 (4):\penalty0 602--610, 2019.

\bibitem[Conte et~al.(2003)Conte, Foggia, Sansone, and Vento]{Conte2003GraphMA}
Conte, D., Foggia, P., Sansone, C., and Vento, M.
\newblock Graph matching applications in pattern recognition and image processing.
\newblock \emph{Proceedings 2003 International Conference on Image Processing (Cat. No.03CH37429)}, 2:\penalty0 II--21, 2003.
\newblock URL \url{https://api.semanticscholar.org/CorpusID:267842199}.

\bibitem[Cormen et~al.(2009)Cormen, Leiserson, Rivest, and Stein]{cormen}
Cormen, T.~H., Leiserson, C.~E., Rivest, R.~L., and Stein, C.
\newblock \emph{Introduction to Algorithms, Third Edition}.
\newblock The MIT Press, 3rd edition, 2009.
\newblock ISBN 0262033844.

\bibitem[Doan et~al.(2021)Doan, Manchanda, Mahapatra, and Reddy]{graphotsim}
Doan, K.~D., Manchanda, S., Mahapatra, S., and Reddy, C.~K.
\newblock Interpretable graph similarity computation via differentiable optimal alignment of node embeddings.
\newblock In \emph{SIGIR}, pp.\  665–674, 2021.

\bibitem[Fan et~al.(2020)Fan, Mao, Wu, and Xu]{grampa}
Fan, Z., Mao, C., Wu, Y., and Xu, J.
\newblock Spectral graph matching and regularized quadratic relaxations: Algorithm and theory.
\newblock In III, H.~D. and Singh, A. (eds.), \emph{Proceedings of the 37th International Conference on Machine Learning}, volume 119 of \emph{Proceedings of Machine Learning Research}, pp.\  2985--2995. PMLR, 13--18 Jul 2020.

\bibitem[Giovanni et~al.(2024)Giovanni, Rusch, Bronstein, Deac, Lackenby, Mishra, and Veli{\v{c}}kovi{\'c}]{oversquashing}
Giovanni, F.~D., Rusch, T.~K., Bronstein, M., Deac, A., Lackenby, M., Mishra, S., and Veli{\v{c}}kovi{\'c}, P.
\newblock How does over-squashing affect the power of {GNN}s?
\newblock \emph{Transactions on Machine Learning Research}, 2024.
\newblock ISSN 2835-8856.
\newblock URL \url{https://openreview.net/forum?id=KJRoQvRWNs}.

\bibitem[Gordon \& Whitley(1993)Gordon and Whitley]{island}
Gordon, V.~S. and Whitley, L.~D.
\newblock Serial and parallel genetic algorithms as function optimizers.
\newblock In \emph{Proceedings of the 5th International Conference on Genetic Algorithms}, pp.\  177–183, San Francisco, CA, USA, 1993. Morgan Kaufmann Publishers Inc.
\newblock ISBN 1558602992.

\bibitem[He \& Singh(2006)He and Singh]{ctree}
He, H. and Singh, A.
\newblock Closure-tree: An index structure for graph queries.
\newblock In \emph{22nd International Conference on Data Engineering (ICDE'06)}, pp.\  38--38, 2006.
\newblock \doi{10.1109/ICDE.2006.37}.

\bibitem[Hopcroft \& Karp(1973)Hopcroft and Karp]{hopcroft1973n}
Hopcroft, J.~E. and Karp, R.~M.
\newblock An n\^{}5/2 algorithm for maximum matchings in bipartite graphs.
\newblock \emph{SIAM Journal on Computing}, 2\penalty0 (4):\penalty0 225--231, 1973.

\bibitem[Hu et~al.(2020)Hu, Fey, Zitnik, Dong, Ren, Liu, Catasta, and Leskovec]{hu2020open}
Hu, W., Fey, M., Zitnik, M., Dong, Y., Ren, H., Liu, B., Catasta, M., and Leskovec, J.
\newblock Open graph benchmark: Datasets for machine learning on graphs.
\newblock In Larochelle, H., Ranzato, M., Hadsell, R., Balcan, M.~F., and Lin, H. (eds.), \emph{Advances in Neural Information Processing Systems}, volume~33, pp.\  22118--22133. Curran Associates, Inc., 2020.

\bibitem[Hu et~al.(2021)Hu, Fey, Zitnik, Dong, Ren, Liu, Catasta, and Leskovec]{ogbg}
Hu, W., Fey, M., Zitnik, M., Dong, Y., Ren, H., Liu, B., Catasta, M., and Leskovec, J.
\newblock Open graph benchmark: Datasets for machine learning on graphs, 2021.
\newblock URL \url{https://arxiv.org/abs/2005.00687}.

\bibitem[Jain et~al.(2024)Jain, Roy, Meher, Chakrabarti, and De]{graphedx}
Jain, E., Roy, I., Meher, S., Chakrabarti, S., and De, A.
\newblock Graph edit distance with general costs using neural set divergence.
\newblock In \emph{The Thirty-eighth Annual Conference on Neural Information Processing Systems}, 2024.

\bibitem[Justice \& Hero(2006)Justice and Hero]{NODE_ADJ_IP}
Justice, D. and Hero, A.
\newblock A binary linear programming formulation of the graph edit distance.
\newblock \emph{IEEE Transactions on Pattern Analysis and Machine Intelligence}, 28\penalty0 (8):\penalty0 1200--1214, 2006.
\newblock \doi{10.1109/TPAMI.2006.152}.

\bibitem[Kuhn(1955)]{kuhn1955hungarian}
Kuhn, H.~W.
\newblock The hungarian method for the assignment problem.
\newblock \emph{Naval Research Logistics Quarterly}, 2\penalty0 (1-2):\penalty0 83--97, 1955.

\bibitem[Leordeanu et~al.(2009)Leordeanu, Hebert, and Sukthankar]{leordeanu2009integer}
Leordeanu, M., Hebert, M., and Sukthankar, R.
\newblock An integer projected fixed point method for graph matching and map inference.
\newblock \emph{Advances in neural information processing systems}, 22, 2009.

\bibitem[Lerouge et~al.(2017{\natexlab{a}})Lerouge, Abu-Aisheh, Raveaux, H{\'e}roux, and Adam]{lerouge2017new}
Lerouge, J., Abu-Aisheh, Z., Raveaux, R., H{\'e}roux, P., and Adam, S.
\newblock New binary linear programming formulation to compute the graph edit distance.
\newblock \emph{Pattern Recognition}, 72:\penalty0 254--265, 2017{\natexlab{a}}.

\bibitem[Lerouge et~al.(2017{\natexlab{b}})Lerouge, Abu-Aisheh, Raveaux, Héroux, and Adam]{MIP-F2}
Lerouge, J., Abu-Aisheh, Z., Raveaux, R., Héroux, P., and Adam, S.
\newblock New binary linear programming formulation to compute the graph edit distance.
\newblock \emph{Pattern Recognition}, 72:\penalty0 254--265, 2017{\natexlab{b}}.
\newblock ISSN 0031-3203.
\newblock \doi{https://doi.org/10.1016/j.patcog.2017.07.029}.
\newblock URL \url{https://www.sciencedirect.com/science/article/pii/S003132031730300X}.

\bibitem[Li et~al.(2019)Li, Gu, Dullien, Vinyals, and Kohli]{icmlged}
Li, Y., Gu, C., Dullien, T., Vinyals, O., and Kohli, P.
\newblock Graph matching networks for learning the similarity of graph structured objects.
\newblock In \emph{ICML}, pp.\  3835--3845, 2019.

\bibitem[Morris et~al.(2020)Morris, Kriege, Bause, Kersting, Mutzel, and Neumann]{tudataset}
Morris, C., Kriege, N.~M., Bause, F., Kersting, K., Mutzel, P., and Neumann, M.
\newblock Tudataset: {A} collection of benchmark datasets for learning with graphs.
\newblock \emph{CoRR}, abs/2007.08663, 2020.
\newblock URL \url{https://arxiv.org/abs/2007.08663}.

\bibitem[Piao et~al.(2023)Piao, Xu, Sun, Rong, Zhao, and Cheng]{piao2023computing}
Piao, C., Xu, T., Sun, X., Rong, Y., Zhao, K., and Cheng, H.
\newblock Computing graph edit distance via neural graph matching.
\newblock \emph{Proceedings of the VLDB Endowment}, 16\penalty0 (8):\penalty0 1817--1829, 2023.

\bibitem[Ranjan et~al.(2022)Ranjan, Grover, Medya, Chakaravarthy, Sabharwal, and Ranu]{ranjan2022greed}
Ranjan, R., Grover, S., Medya, S., Chakaravarthy, V., Sabharwal, Y., and Ranu, S.
\newblock Greed: A neural framework for learning graph distance functions.
\newblock \emph{Advances in Neural Information Processing Systems}, 35:\penalty0 22518--22530, 2022.

\bibitem[Ranu et~al.(2014)Ranu, Hoang, and Singh]{divquery}
Ranu, S., Hoang, M., and Singh, A.
\newblock Answering top-k representative queries on graph databases.
\newblock In \emph{Proceedings of the 2014 ACM SIGMOD international conference on Management of data}, pp.\  1163--1174, 2014.

\bibitem[Romera-Paredes et~al.(2024)Romera-Paredes, Barekatain, Novikov, Balog, Kumar, Dupont, Ruiz, Ellenberg, Wang, Fawzi, et~al.]{romera2024mathematical}
Romera-Paredes, B., Barekatain, M., Novikov, A., Balog, M., Kumar, M.~P., Dupont, E., Ruiz, F.~J., Ellenberg, J.~S., Wang, P., Fawzi, O., et~al.
\newblock Mathematical discoveries from program search with large language models.
\newblock \emph{Nature}, 625\penalty0 (7995):\penalty0 468--475, 2024.

\bibitem[Singh et~al.(2008)Singh, Xu, and Berger]{doi:10.1073/pnas.0806627105}
Singh, R., Xu, J., and Berger, B.
\newblock Global alignment of multiple protein interaction networks with application to functional orthology detection.
\newblock \emph{Proceedings of the National Academy of Sciences}, 105\penalty0 (35):\penalty0 12763--12768, 2008.
\newblock \doi{10.1073/pnas.0806627105}.
\newblock URL \url{https://www.pnas.org/doi/abs/10.1073/pnas.0806627105}.

\bibitem[Szklarczyk et~al.(2019)Szklarczyk, Gable, Lyon, Junge, Wyder, Huerta-Cepas, Simonovic, Doncheva, Morris, and et~al.]{szklarczyk2019string}
Szklarczyk, D., Gable, A.~L., Lyon, D., Junge, A., Wyder, S., Huerta-Cepas, J., Simonovic, M., Doncheva, N.~T., Morris, J.~H., and et~al., P.~B.
\newblock String v11: protein–protein association networks with increased coverage, supporting functional discovery in genome-wide experimental datasets.
\newblock \emph{Nucleic Acids Research}, 47\penalty0 (D1):\penalty0 D607--D613, 2019.

\bibitem[Wang et~al.(2021)Wang, Zhang, Yu, Yan, and Yang]{genn}
Wang, R., Zhang, T., Yu, T., Yan, J., and Yang, X.
\newblock Combinatorial learning of graph edit distance via dynamic embedding.
\newblock In \emph{IEEE Conference on Computer Vision and Pattern Recognition}, 2021.

\bibitem[Wang et~al.(2012)Wang, Ding, Tung, Ying, and Jin]{wang2012efficient}
Wang, X., Ding, X., Tung, A. K.~H., Ying, S., and Jin, H.
\newblock An efficient graph indexing method.
\newblock In \emph{Proceedings of the 2012 IEEE 28th International Conference on Data Engineering (ICDE '12)}, pp.\  210--221, USA, 2012. IEEE Computer Society.

\bibitem[Wu et~al.(2018)Wu, Ramsundar, Feinberg, Gomes, Geniesse, Pappu, Leswing, and Pande]{wu2018moleculenet}
Wu, Z., Ramsundar, B., Feinberg, E.~N., Gomes, J., Geniesse, C., Pappu, A.~S., Leswing, K., and Pande, V.
\newblock Moleculenet: a benchmark for molecular machine learning.
\newblock \emph{Chemical science}, 9\penalty0 (2):\penalty0 513--530, 2018.

\bibitem[Yanardag \& Vishwanathan(2015)Yanardag and Vishwanathan]{yanardag2015deep}
Yanardag, P. and Vishwanathan, S.
\newblock Deep graph kernels.
\newblock In \emph{Proceedings of the 21st ACM SIGKDD International Conference on Knowledge Discovery and Data Mining (KDD '15)}, pp.\  1365--1374, New York, NY, USA, 2015. Association for Computing Machinery.

\bibitem[Zhang et~al.(2021)Zhang, Bu, Ester, Li, Yao, Yu, and Wang]{h2mn}
Zhang, Z., Bu, J., Ester, M., Li, Z., Yao, C., Yu, Z., and Wang, C.
\newblock H2mn: Graph similarity learning with hierarchical hypergraph matching networks.
\newblock In \emph{Proceedings of the 27th ACM SIGKDD Conference on Knowledge Discovery \& Data Mining}, KDD '21, pp.\  2274–2284, New York, NY, USA, 2021. Association for Computing Machinery.
\newblock ISBN 9781450383325.
\newblock \doi{10.1145/3447548.3467328}.
\newblock URL \url{https://doi.org/10.1145/3447548.3467328}.

\bibitem[Zhuo \& Tan(2022)Zhuo and Tan]{eric}
Zhuo, W. and Tan, G.
\newblock Efficient graph similarity computation with alignment regularization.
\newblock \emph{Advances in Neural Information Processing Systems}, 35:\penalty0 30181--30193, 2022.

\bibitem[Zitnik et~al.(2019)Zitnik, Feldman, and et~al.]{zitnik2019evolution}
Zitnik, M., Feldman, M.~W., and et~al., J.~L.
\newblock Evolution of resilience in protein interactomes across the tree of life.
\newblock \emph{Proceedings of the National Academy of Sciences}, 116\penalty0 (10):\penalty0 4426--4433, 2019.

\end{thebibliography}

\newpage
\appendix
\onecolumn
\section{Appendix}
\subsection{Proofs}
\subsubsection{NP-hardness of Eq.~\ref{eq:minA}}
\label{app:setcover}
\section*{Reduction to Prove NP-Hardness}

We reduce the \textit{Set Cover} problem to the given problem in polynomial time to demonstrate its NP-hardness.

Given a universe of elements \( U = \{e_1, e_2, \ldots, e_n\} \), a collection of sets \( \mathcal{S} = \{S_1, S_2, \ldots, S_m\} \) where \( S_i \subseteq U \), and a budget \( b \), the set cover problem seeks to determine if there exist \( b \) sets \( S_1, \ldots, S_b \in \mathcal{S} \) whose union covers all elements of \( U \).

Given an instance of the set cover problem, we construct a bipartite graph $\CB=(\CV,\CU,\CE,\CW)$, where $\CV=\CS$, $\CU=U$, and an edge \( (S_i, e_j) \in \CE \) exists if and only if \( S_i \) covers \( e_j \). Each edge \( (S_i, e_j) \) has a weight:
\[
w(S_i, e_j) =
\begin{cases} 
1 & \text{if } S_i \text{ covers } e_j, \\
1 + \Delta & \text{if } S_i \text{ does not cover } e_j.
\end{cases}
\]
where \( \Delta > 0 \).

The objective is to select \( b \) nodes from \( \CV \) (representing sets $\CS$) such that Eq.~\ref{eq:minA} is minimized on graph $\CB$.

 If a \textit{Set Cover} of size \( b \) exists, then all \( n \) elements can be covered by \( b \) sets. This means if we select the corresponding nodes $\CA^*\subseteq \CV$, then every node in $\CU$ will have at least one edge of weight $1$ from some node in $\CA^*$ incident on it. Hence, $\CJ(\CA^*)$ will return a cumulative sum of $n$. 
 
Conversely, if no \textit{Set Cover} of size \( b \) exists, then some elements will not be covered by the selected sets, and their corresponding nodes in $\CU$ will have only edges of edge weights \( 1 + \Delta \) from nodes in $\CA^*$.

Therefore, a solution to the \textit{Set Cover} problem exists iff selecting the corresponding nodes $\CA^*\subseteq \CV$ leads to \(\CJ(\CA^*)= n \). Conversely, if \(\CJ(\CA^*)> n \), it implies that no \( b \)-set cover exists.

\subsubsection{Monotonicity and Submodularity}
\label{app:submodularity}
\begin{lemma}
Monotonicity: $\CJ(\CA)\leq \CJ(\CA')$ if $\CA\supseteq\CA'$.
\end{lemma}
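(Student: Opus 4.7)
The plan is to establish the inequality pointwise (for each training pair) and then sum. Fix an arbitrary pair $\langle \CG_1, \CG_2 \rangle \in \mathbb{T}$, and consider the inner quantity
\[
f(\CA; \CG_1, \CG_2) \;=\; \min_{P \in \CA} \left\{ \ged_{\pi(P)}(\CG_1, \CG_2) \right\}.
\]
Since the minimum is taken over a set of nonnegative reals, enlarging the index set can only reduce (or preserve) the minimum: if $\CA \supseteq \CA'$, then every candidate value $\ged_{\pi(P)}(\CG_1,\CG_2)$ with $P \in \CA'$ is also a candidate in the minimization over $\CA$, so $f(\CA; \CG_1, \CG_2) \leq f(\CA'; \CG_1, \CG_2)$.

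Next, I would sum this pointwise inequality over all training pairs $\langle \CG_1, \CG_2 \rangle \in \mathbb{T}$. Because summation preserves inequalities between nonnegative quantities, this yields
\[
\CJ(\CA) \;=\; \sum_{\langle \CG_1, \CG_2 \rangle \in \mathbb{T}} f(\CA; \CG_1, \CG_2) \;\leq\; \sum_{\langle \CG_1, \CG_2 \rangle \in \mathbb{T}} f(\CA'; \CG_1, \CG_2) \;=\; \CJ(\CA'),
\]
which is exactly the claimed inequality.

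There is no real obstacle in this argument: the lemma is essentially the statement that the minimum of a finite set is a set-monotone function of its index set, and everything else is additivity. The only subtlety worth flagging is the sign convention; the word ``monotonicity'' here refers to the standard submodular-optimization convention under a \emph{minimization} objective, where adding elements to $\CA$ weakly decreases $\CJ(\CA)$. This is the form needed to apply the greedy guarantee in Alg.~\ref{alg:greedy} together with the submodularity claim deferred to App.~\ref{app:submodularity}.
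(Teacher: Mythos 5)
Your proof is correct and follows the same reasoning as the paper's: enlarging the set over which the minimum is taken can only decrease (or preserve) each per-pair minimum, and summing over training pairs gives $\CJ(\CA)\leq\CJ(\CA')$. The paper states this in one sentence; you merely make the pointwise-then-sum structure explicit, which is the same argument.
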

\begin{proof} Since $\CJ(\CA')$ computes minimum over all available mappings in $\CA'$, the minimum can only reduce when additional mappings are added to form $\CA$.
\end{proof}
\begin{lemma}
Submodularity: $\CJ(\CA\cup \{P\})-\CJ(\CA) \leq  \CJ(\CA'\cup \{P\})-\CJ(\CA')$.
\end{lemma}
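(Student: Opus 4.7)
The plan is to reduce the lemma to a per-training-pair statement using the additive structure of $\CJ$, and then verify that statement by a short calculation with the minimum operator.

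First, I would exploit the definition in Eq.~\ref{eq:J}: $\CJ(\CA) = \sum_{\langle \CG, \CG'\rangle \in \mathbb{T}} h_{\CG,\CG'}(\CA)$ with $h_{\CG,\CG'}(\CA) := \min_{P \in \CA} \ged_{\pi(P)}(\CG, \CG')$. Since the inequality to be shown is linear in $\CJ$ and the decomposition has non-negative coefficients (all equal to one), it suffices to establish the analogous inequality for each term $h_{\CG,\CG'}$ in isolation and then sum the resulting inequalities over the training pairs in $\mathbb{T}$.

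Second, I would fix a single pair $\langle \CG, \CG'\rangle$ and abbreviate $c_P := \ged_{\pi(P)}(\CG, \CG')$, so the problem collapses to reasoning about $h(\CA) := \min_{P \in \CA} c_P$, the minimum of finitely many non-negative scalars. A case split on whether the candidate $c_P$ beats the current minimum yields a clean closed form for the marginal:
\[
h(\CA \cup \{P\}) - h(\CA) \;=\; \min\bigl(h(\CA),\, c_P\bigr) - h(\CA) \;=\; \min\!\bigl(0,\; c_P - h(\CA)\bigr).
\]

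Third, Lemma 1 (monotonicity) gives $h(\CA) \leq h(\CA')$ when $\CA \supseteq \CA'$. Combining this with the fact that $t \mapsto \min(0,\, c_P - t)$ is monotonic in $t$ chains into the desired per-pair inequality between the two marginals, and summing over $\mathbb{T}$ recovers the global claim. The only genuine care needed is to track the direction in which the inequality propagates through the $\min$ — essentially the observation that the pointwise minimum of a growing family of scalars exhibits the usual diminishing-marginal-returns behaviour. This bookkeeping is the heart of the argument and mirrors the classical proof that facility-location style coverage objectives are submodular; beyond it, the proof is routine.
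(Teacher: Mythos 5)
Your decomposition of $\CJ$ into per-pair terms and the closed form $h(\CA\cup\{P\})-h(\CA)=\min(0,\,c_P-h(\CA))$ are correct, and the underlying idea is the same one the paper's proof-by-contradiction rests on (if $c_P$ beats the minimum over the superset, it beats the minimum over the subset, and by at least as much); your per-pair, magnitude-aware version is in fact tighter than the paper's argument, which reasons loosely about the ``number of graph pairs'' where $P$ attains the minimum. The genuine problem is the one step you explicitly defer: tracking the direction of the inequality through the $\min$. With $\CA\supseteq\CA'$, monotonicity gives $h(\CA)\le h(\CA')$, and since $t\mapsto\min(0,c_P-t)$ is non-increasing, your closed form yields $\min(0,c_P-h(\CA))\ \ge\ \min(0,c_P-h(\CA'))$; summing over $\mathbb{T}$ gives $\CJ(\CA\cup\{P\})-\CJ(\CA)\ \ge\ \CJ(\CA'\cup\{P\})-\CJ(\CA')$, the \emph{reverse} of the displayed inequality. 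A two-element check confirms this: per-pair costs $1$ and $2$ for the programs in $\CA$, only $2$ for $\CA'\subseteq\CA$, and $c_P=0$ give signed marginals $-1$ and $-2$ respectively. So the claim that your chain ``recovers the global claim'' as displayed is not something any correct bookkeeping can deliver: with signed marginals and $\CA\supseteq\CA'$, the stated inequality simply fails, because the per-pair minimum is a supermodular (not submodular) set function.

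What your calculation does prove is the intended diminishing-returns property for the \emph{reductions}: for each pair, $h(\CA)-h(\CA\cup\{P\})=\max(0,\,h(\CA)-c_P)\le\max(0,\,h(\CA')-c_P)=h(\CA')-h(\CA'\cup\{P\})$, hence $\CJ(\CA)-\CJ(\CA\cup\{P\})\le\CJ(\CA')-\CJ(\CA'\cup\{P\})$, i.e., submodularity of the improvement $C-\CJ$ for any constant $C$ rather than of $\CJ$ itself. This is exactly what the paper's prose (``marginal reduction \ldots at most as large'') and the greedy selection in Alg.~\ref{alg:greedy} actually rely on, and it is also where your facility-location analogy belongs: the per-pair term here is a min-cost, not a max-coverage, objective, so the classical argument applies only after negation. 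To complete your proof, restate the lemma in terms of reductions (equivalently, of $C-\CJ$), or swap the roles of $\CA$ and $\CA'$ in the displayed inequality, and then run your chain with the direction of each step written out explicitly.
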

\begin{proof} We seek to show that the marginal reduction in $\CJ(\CA)$ when a program (mapping) $P$ is added to $\CA$ is atmost as large as adding $P$ to its subset $\CA'$. We establish this through \textit{proof by contradiction}.

Let us assume 
\begin{equation}
\label{eq:subcontra}
\exists \CA\supseteq\CA',\:\CJ(\CA\cup\{P\})-\CJ(\CA)> \CJ(\CA'\cup\{P\})-\CJ(\CA')
\end{equation}
Due to the min operator in Eq.~\ref{eq:J}, Eq.~\ref{eq:subcontra} implies that the additional number of graph pairs where $P$ contributes to the minimum mapping is higher when added to $\CA$ than when added to $\CA'$. This creates a contradiction, since if $P$ contributes to the minimum of a graph pair in $\CA\cup\{P\}$, then it guaranteed to contribute to the minimum for the same pair in $\CA'\cup\{P\}$ as well.
\end{proof}
\label{app:prompt}
{
   \begin{figure}[h!]
   \hspace{1.5cm}
    \includegraphics[scale=0.67]{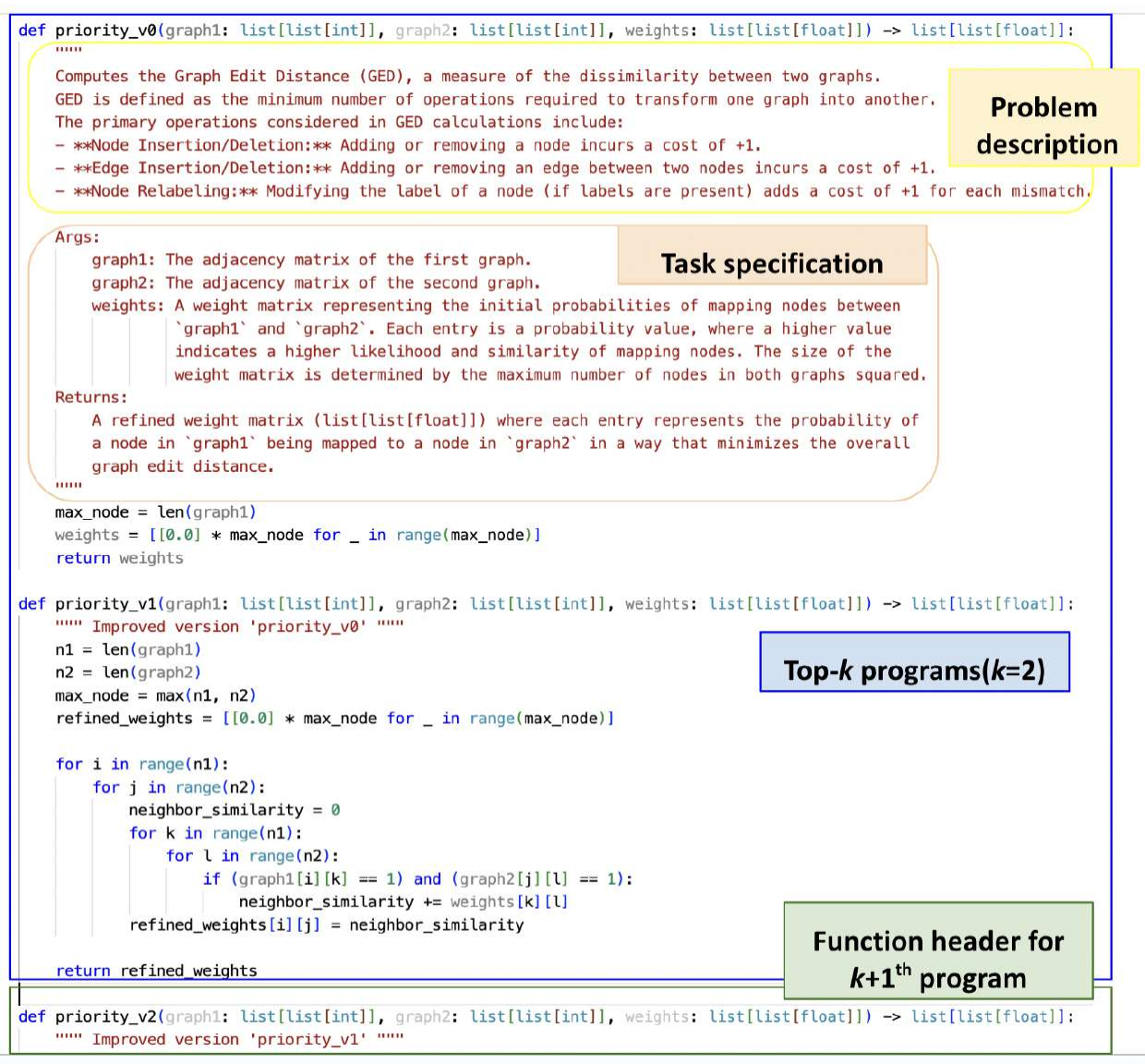}
    \caption{Example of an input prompt to \methodname}
    \label{fig:prompt}
\end{figure} 
\subsection{Experiments}
\subsubsection{Setup}
\label{app:setup}
All experiments ran on a machine equipped with an Intel Xeon Gold 6142 CPU @1GHz and a GeForce GTX 1080 Ti GPU. While non-neural methods and \methodname run on the CPU, neural baselines exploit the GPU. For the LLM,
we use \href{https://cloud.google.com/vertex-ai/generative-ai/docs/learn/models#gemini-1.5-pro}{Gemini 1.5 Pro}. In particular, we have used the initial stable version of  Gemini 1.5 Pro, i.e., gemini-1.5-pro-001, which was released on May 24, 2024. 


\textbf{Hyper-parameters:}
\begin{table}[b!]
\centering
\renewcommand{\thetable}{H} 
\begin{tabular}{l|r}
\toprule
Hyper-parameter & Value \\ \midrule
$k$          & 2         \\ 
$b$ &15 \\
number of islands &5\\
temperature &0.99\\
Algorithm for bipartite matching & Neighbor-biased mapper~\cite{ctree}\\
\bottomrule
\end{tabular}
\vspace{-0.1in}
\caption{Hyper-parameters used for \methodname}
\label{tab:parameters}
\end{table}
\begin{figure}
    \includegraphics[scale=0.47]{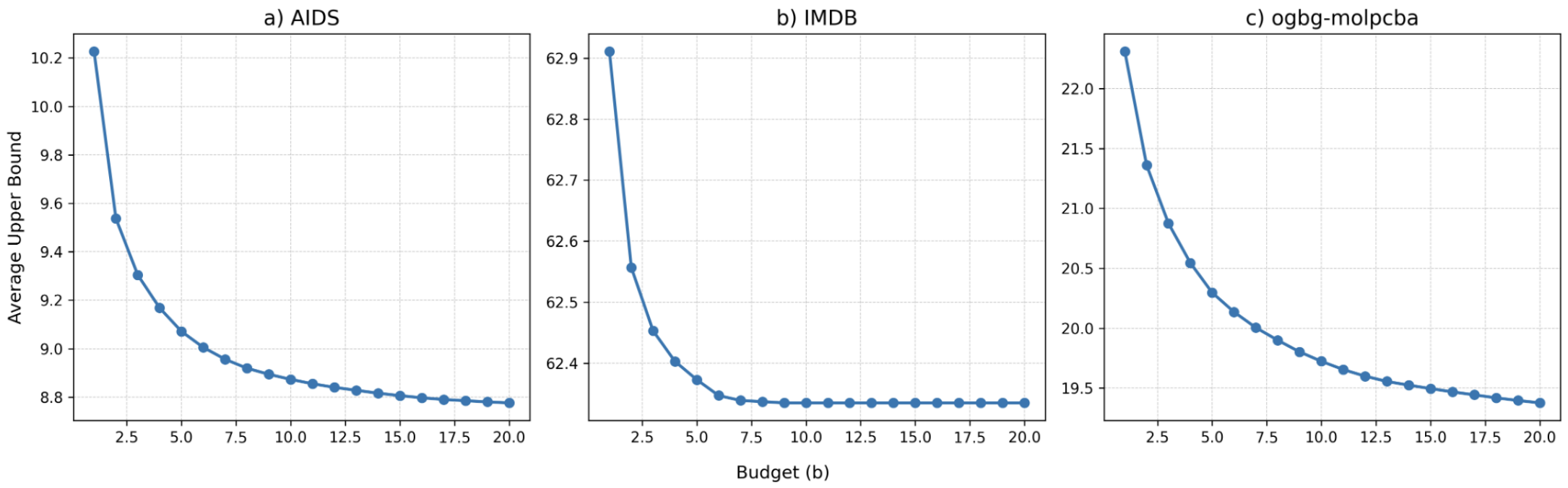}
    \caption{Avg. Upper Bound vs function budget (b) for submodular greedy selection}
    \label{fig:avg_upper_bound}
\end{figure} 
Table~\ref{tab:parameters} lists the hyper-parameters used for \methodname. $k$ stands for the number of functions per response generated by the LLM and $b$ is the function budget employed for submodularity while training. 
We decided to use $k$ as 2, since with greater values of k, we observed no significant improvement in quality metrics. This was also observed in FunSearch~\cite{romera2024mathematical}. For the function budget $b$, we observed that a value of 15 was good enough for most datasets (Refer Fig.~\ref{fig:avg_upper_bound}).

\subsection{Datasets}
\label{app:datasets}
The semantics of these datasets are as follows:
\begin{itemize}
    \item \textbf{ogbg-molhiv and ogbg-molpcba:} These are chemical compound datasets, with each graph representing a molecule. Nodes in these graphs correspond to atoms and are labeled with their atomic numbers, while edges denote the chemical bonds between atoms. These datasets vary in size and complexity, with a rich diversity of molecular structures, enabling us to test the robustness and generalizability of our method.
    \item \textbf{ogbg-code2:} This dataset comprises a vast collection of Abstract Syntax Trees (ASTs) generated from nearly 450,000 Python method definitions. Each graph in this dataset represents an AST, with nodes labeled from a predefined set of 97 categories, capturing various syntactic constructs within the methods. These graphs are considered undirected, simplifying the representation while preserving structural relationships.
    \item \textbf{ogbg-ppa:} This dataset includes undirected protein association neighborhoods extracted from protein-protein interaction networks of 1,581 species\cite{szklarczyk2019string} across 37 diverse taxonomic groups. To build these neighborhoods, 100 proteins were randomly selected from each species, and 2-hop protein association neighborhoods were constructed around each selected protein\cite{zitnik2019evolution}. In these graphs, proteins are represented as nodes, and edges indicate biologically relevant associations between them.
    \item \textbf{AIDS:} This dataset is a collection of graphs sourced from the AIDS antiviral screen database, each graph representing a chemical compound's molecular structure. These graphs are labeled, capturing meaningful properties of the compounds, and are compact in size, containing no more than 10 nodes.
    \item \textbf{Linux:} A collection of program dependence graphs where nodes correspond to statements and edges indicate dependencies between statements. The graph sizes in this dataset are also limited to 10 nodes. This dataset is unlabeled and was introduced in \cite{wang2012efficient}.
    \item \textbf{IMDB:} This dataset consists of ego-networks of actors and actresses who have shared screen time in movies. Each graph represents an ego-network where the nodes correspond to individuals (actors or actresses), and the edges denote shared appearances in films. This dataset is unlabeled and was introduced in \cite{yanardag2015deep}.
\end{itemize}
\subsubsection{Ground-truth data generation}
\label{app:groundtruth}
We employ \textsc{Mip}-F2~\cite{MIP-F2} to generate ground truth GED. \textsc{Mip}-F2 returns the lower and upper bounds of \ged. We compute these bounds with a time limit of 600 seconds per pair. Pairs with equal lower and upper bounds are included in the ground truth.

\subsubsection{Metrics}
\label{app:metrics}
We use the following two metrics to quantify accuracy:
\begin{itemize}
    \item \textbf{RMSE:} Evaluates the prediction accuracy by measuring the
disparities between actual and predicted values. For $n$ graph pairs,
it is defined as: 
\begin{center}
    $\sqrt{\frac{1}{n} \sum_{i=1}^n (\text{true-ged}_i - \text{pred-ged}_i)^2}$
\end{center}
\item \textbf{Exact Match Ratio:} Represents the proportion of graph pairs where the predicted GED exactly matches the actual GED. For \( n \) graph pairs, it is defined as:  
\begin{center}
    $\frac{1}{n} \sum_{i=1}^n \mathbb{I}\big(\text{true-ged}_i = \text{pred-ged}_i\big)$
\end{center}
where \(\mathbb{I}(\cdot)\) is an indicator function that returns 1 if the condition inside is true, and 0 otherwise. A higher Exact Match Ratio indicates better predictive accuracy at the individual graph pair level.
\end{itemize}
\comment{
\subsubsection{Test Data:} The statistics of the test data used for the evaluation of \methodname are presented in Table. 5.
\begin{table}[h]
\centering
\label{tab:test_data_stats}
\renewcommand{\thetable}{H} 
\begin{tabular}{l|rrr}
\toprule
\textbf{Name} & \textbf{\# Graph pairs} & \textbf{Avg $|V|$} & \textbf{Avg $|E|$} \\ \midrule
AIDS          & 1000                    & 8.8                & 8.8                \\ 
Linux         & 1000                    & 7.6                & 7.0                \\ 
IMDB          & 967                     & 12.2               & 57.0               \\
ogbg-molhiv   & 902                     & 23.0               & 49.5               \\ 
ogbg-molpcba  & 859                     & 25.0               & 54.1               \\ 
ogbg-code2    & 968                     & 36.7               & 35.7               \\ \bottomrule
\end{tabular}
\caption{Test Data Statistics}
\end{table}

\begin{table*}[ht!]
\vspace{-0.1in}
\centering
\caption{Accuracy Comparison (MAE) among top performing baselines (based on avg. rank in Table 3). The top-$3$ lowest MAEs in each dataset are highlighted in various shades of green, with darker shades indicating lower MAE (better).In cases where the MAE becomes the same after rounding to two places after the decimal, an asterisk (*) indicates the lower (better) value when additional decimal places are considered.}
\label{tab:mae}
\setlength{\tabcolsep}{9pt}
\scalebox{0.90}{
\begin{tabular}{@{}p{1.8cm}|l>{\centering\arraybackslash}p{1.7cm}>{\centering\arraybackslash}p{1.7cm}>{\centering\arraybackslash}p{2.0cm}>{\centering\arraybackslash}p{2.0cm}>{\centering\arraybackslash}p{2.2cm}|>{\centering\arraybackslash}p{1.8cm}>{\centering\arraybackslash}p{1.4cm}@{}}
\toprule
\textbf{Methods} & \textbf{AIDS} & \textbf{Linux} & \textbf{IMDB} & \textbf{ogbg-molhiv} & \textbf{ogbg-code2} & \textbf{ogbg-molpcba} & \textbf{Avg. Rank} \\ 
\midrule
 \methodname  & \cellcolor{shade1}0.22 & \cellcolor{shade1}0.01 & \cellcolor{shade2}0.06 & 2.31 & \cellcolor{shade2}3.45 & \cellcolor{shade3}2.63 & 2.5 \\ 
 \methodname-MIX  & \cellcolor{shade2}0.27 & \cellcolor{shade1}0.01* & \cellcolor{shade1}0.04 & \cellcolor{shade3}2.28 & \cellcolor{shade1}3.34 & 2.81 & \cellcolor{yellow}2.17 \\ 

    \greed        & \cellcolor{shade3}0.48 & 0.31 & \cellcolor{shade3}3.00 & \cellcolor{shade2}2.17 & \cellcolor{shade3}4.30 & \cellcolor{shade1}1.77 & 2.83 \\ 
         ERIC        & 0.84 & \cellcolor{shade3}0.19 & 10.19 & 2.35 & 15.19 & \cellcolor{shade2}2.02 & 3.83 \\ 
     \gedgnn       & 0.92 & 0.29 & 4.43 & \cellcolor{shade1}1.75 & 16.68 & 4.58 & 4 \\ 
\bottomrule
\end{tabular}}
\end{table*}
}
\begin{table*}[ht!]
\vspace{0.2in}
\centering
\setlength{\tabcolsep}{9pt}
\scalebox{0.90}{
\begin{tabular}{@{}p{1.8cm}|l>{\centering\arraybackslash}p{2.4cm}>{\centering\arraybackslash}p{2.4cm}>{\centering\arraybackslash}p{2.4cm}>{\centering\arraybackslash}p{2.2cm}>{\centering\arraybackslash}p{2.2cm}>{\centering\arraybackslash}p{2.2cm}@{}}
\toprule
 \textbf{Method} & \textbf{AIDS} & \textbf{Linux} & \textbf{IMDB} & \textbf{ogbg-molhiv} & \textbf{ogbg-code2} & \textbf{ogbg-molpcba} \\ 
\midrule
\methodname  & 0.83 & $\approx1$ & 0.99 & 0.18 & 0.11 & 0.12  \\
\bottomrule
\end{tabular}}
\caption{EMR results of \methodname for all datasets. Values in the range (0.99,1) are denoted as $\approx1$}
\label{tab:EMR_genie}
\end{table*}

\begin{figure}[t]
\vspace{-0.05in}
\centering
    \includegraphics[width=0.9\textwidth]{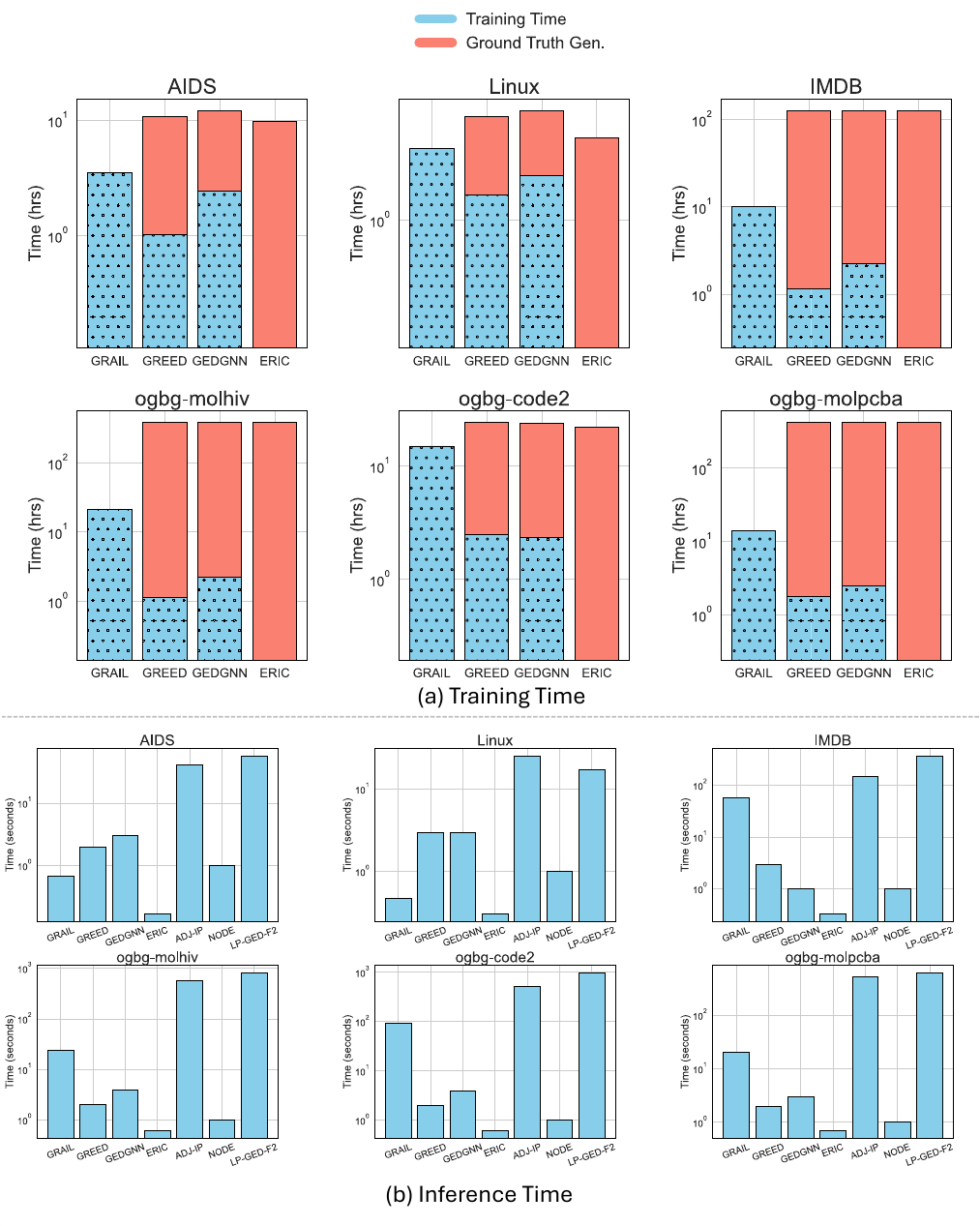}
    \vspace{-0.1in}
    \caption{(a) Training Time: Comparison of \methodname with the top-3 neural methods in Table ~\ref{tab:rmse}. (b) Inference Time: Comparison of \methodname with the top-3 neural and non-neural methods in Table ~\ref{tab:rmse}. The top-3 methods have been selected based on avg. ranks.\\ \textbf{Note:} Ground truth generation time is the same for a dataset (9 hrs 43 min : AIDS, 3 hrs 25 min : Linux, 124 hrs 25 min : IMDB, 379 hrs 19 min : ogbg-molhiv, 21 hrs 42 min : ogbg-code2, 414 hrs 30 min : ogbg-molpcba) for all neural methods, but appears to be different in the plots due to log scale conversion.}
    \label{fig:train_inference_time_comparison}
    \vspace{-0.1in}
\end{figure}

 \begin{figure}
    \includegraphics[width=\linewidth]{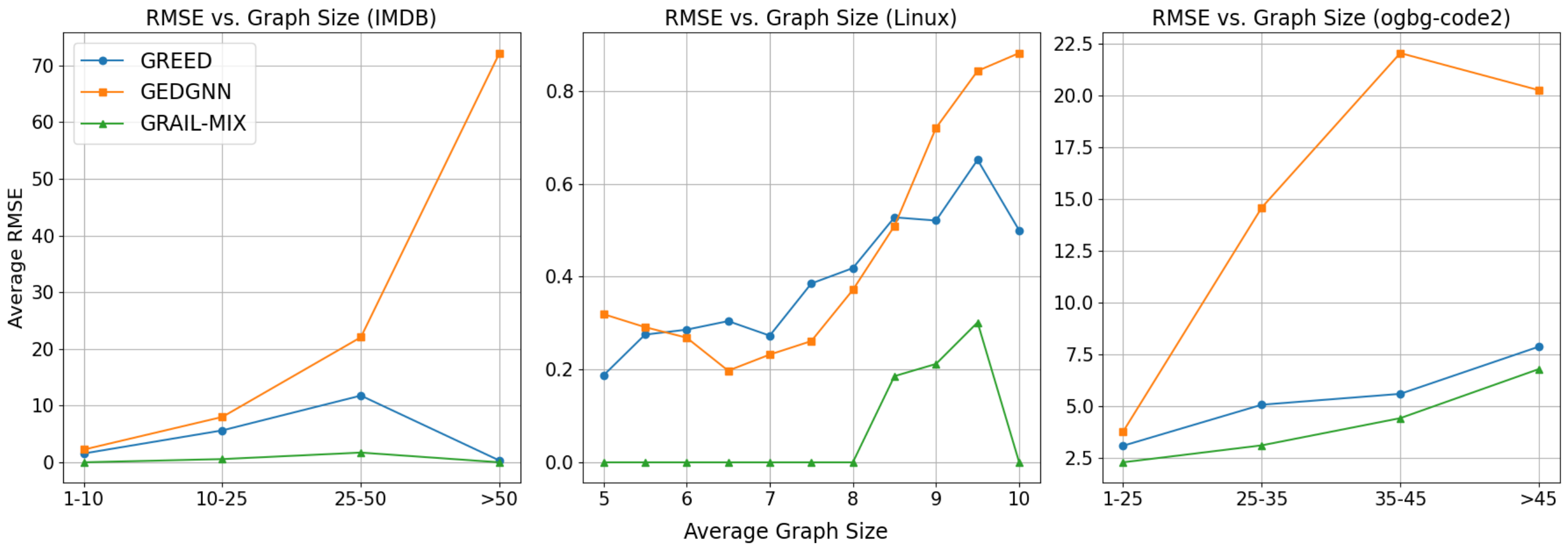}
    \caption{Avg. RMSE vs. Avg. Graph Size comparison on IMDB, Linux and ogbg-code2 datasets. \methodname-\textsc{Mix} outperforms the best baselines at both smaller and larger graph sizes. The rate of increase of error is lower for \methodname-\textsc{Mix} as opposed to \greed and \gedgnn with increasing average graph size.}
    \label{fig:rmse_vs_graph_size}
\end{figure}

\begin{figure}
 \centering
   \hspace{0.5cm}
    \includegraphics[width=0.8\linewidth]{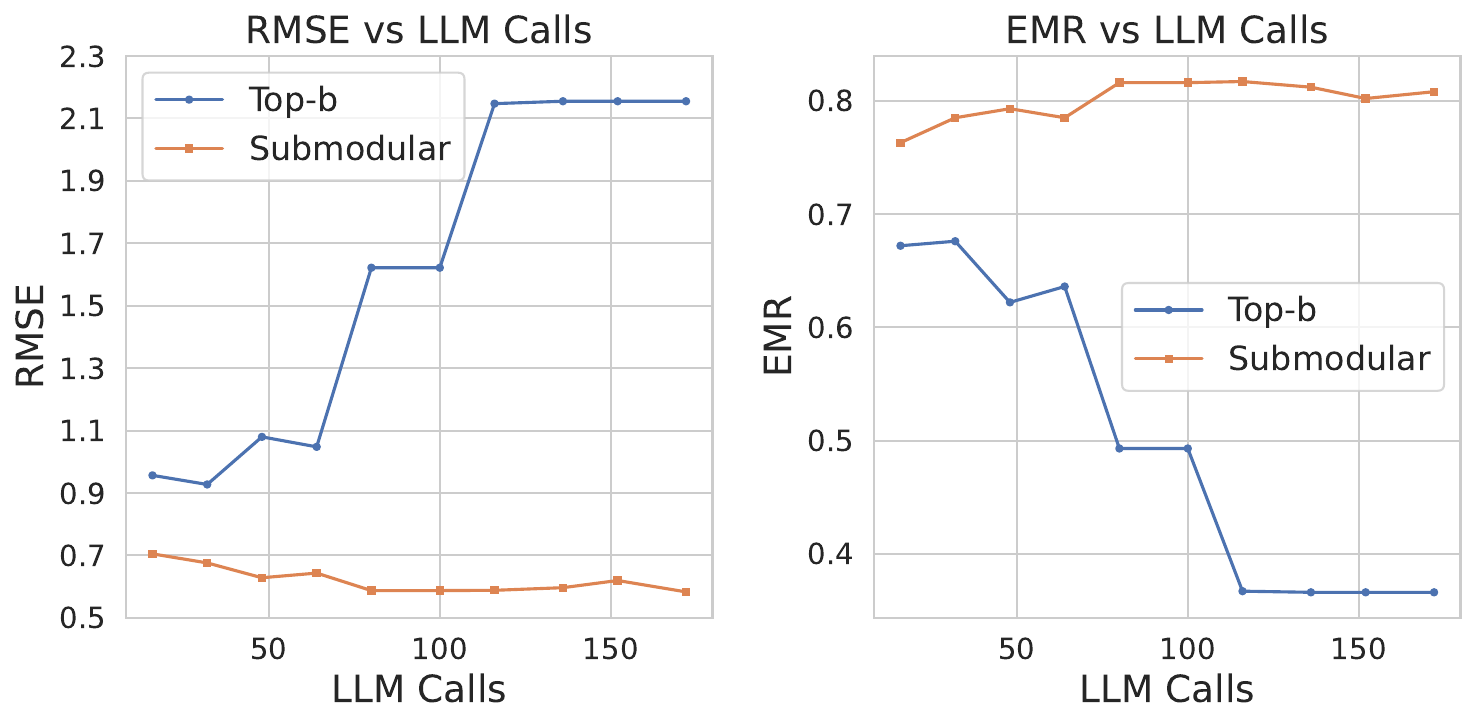}
    \caption{Performance comparison of Top-$b$ vs. Greedy Submodular on the test set of AIDS dataset with an increasing number of LLM calls.}
    \label{fig:topk_vs_greedy_aids}
\end{figure} 

\begin{figure}
 \centering
    \includegraphics[width=1.1\linewidth]{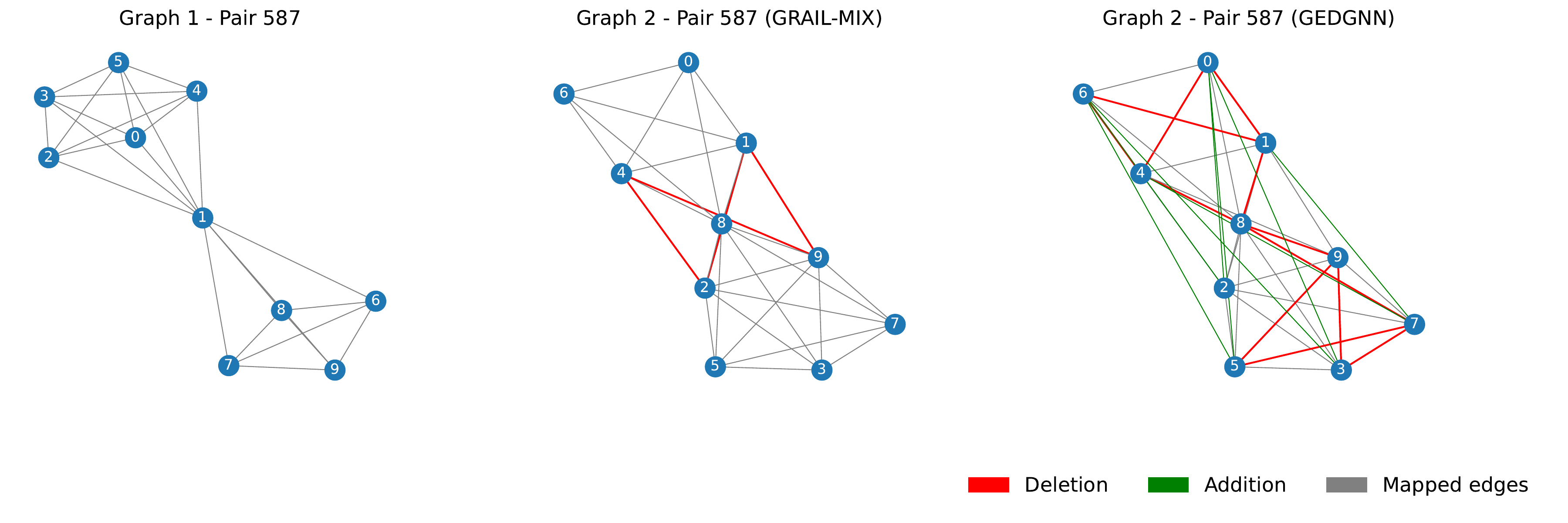}
    \caption{IMDB Case Study: The left-most graph represents Graph 1, while the middle and right-most graphs depict Graph 2 with predicted edits from \gmix (Fig: \ref{fig:casestudy_program}) and \gedgnn, respectively. The red and green edges in each graph indicate the edge edits predicted by both methods. Ground Truth GED:4, \gmix GED:4, \gedgnn Mapping's GED: 20.}
    \label{fig:casestudy_graph}
\end{figure} 

\begin{figure}
 \centering
    \includegraphics[width=1.2\linewidth]{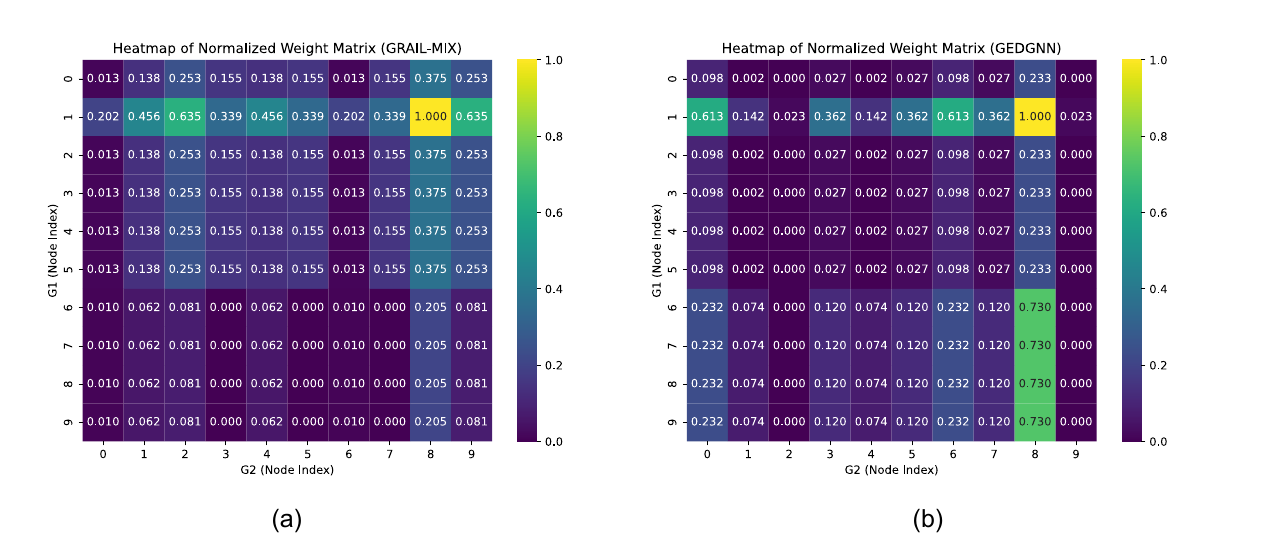}
    \caption{IMDB Case Study: Heatmap of weight matrix generated by (a) \methodname-\textsc{Mix}  (Fig: \ref{fig:casestudy_program}) and (b) \gedgnn}
    \label{fig:casestudy_heatmap}
\end{figure} 

\begin{figure}
 \centering
   \hspace{0.5cm}
    \includegraphics[width=0.98\linewidth]{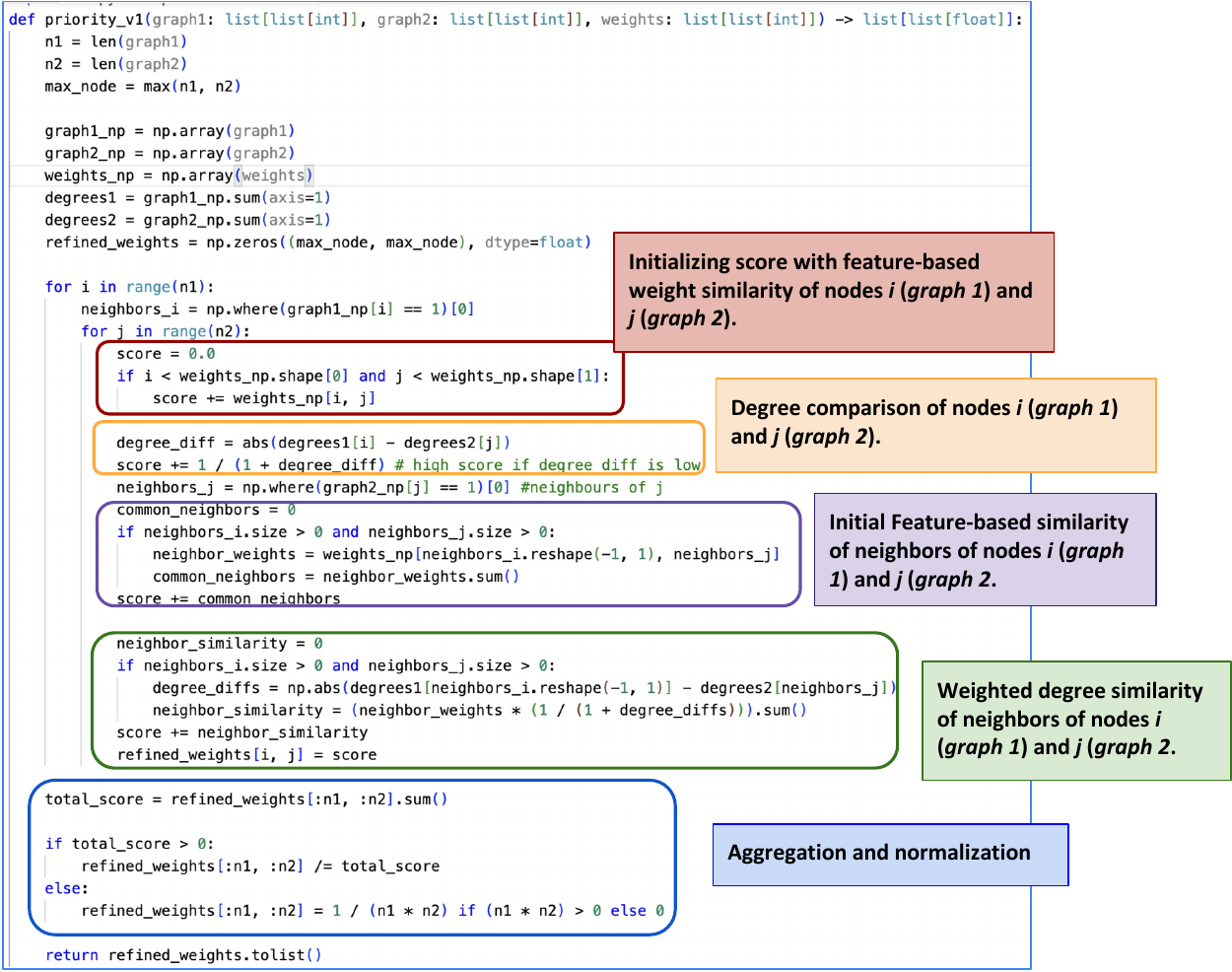}
    \caption{IMDB Case Study: Program discovered by \methodname-\textsc{Mix} that has minimum individual RMSE on IMDB dataset.}
    \label{fig:casestudy_program}
\end{figure}

\subsubsection{Efficiency Analysis}
\label{app:efficiency}
The training and inference time analysis is shown in Fig.~\ref{fig:train_inference_time_comparison}.

\textbf{Training Time:} From Fig.~\ref{fig:train_inference_time_comparison}a, we observe that \methodname is significantly more efficient than the neural baselines. Neural methods require NP-hard ground truth training data, which involves extensive computation times of up to 15 days. Additionally, note that \methodname-\textsc{Mix} requires training only once while performing on par with \methodname and neural baselines in terms of approximation error (see Table~\ref{tab:rmse}). 

\textbf{Inference Time:} We compare the inference time of \methodname with the top three neural and non-neural methods from Table~\ref{tab:rmse}, as shown in Fig.~\ref{fig:train_inference_time_comparison}b. At the onset, we point out that while  \methodname infers on CPUs and provides the node mapping in addition to the predicted \ged, neural baselines rely on GPUs and only provide the \ged.  Hence, neural methods have a lower computational workload while having access to more powerful computational resources. Results indicate that \methodname achieves faster inference times than neural baselines for smaller and sparser datasets, such as AIDS and Linux. However, inference times increase for larger and denser datasets, such as IMDB and ogbg, due to the computational overhead of computing mappings. The maximum recorded inference time is 94.6 seconds for 968 graph pairs in the ogbg-code2 dataset (\(\sim 0.1\) seconds per pair), which remains reasonable considering the various advantages of \methodname, including its independence from ground truth data, one-time training for \methodname-\textsc{Mix}, and strong generalization capabilities. Furthermore,  the efficiency of \methodname's programs can be further improved through human intervention or translation to more efficient languages, such as C.
\comment{\begin{figure*}[ht!]
\centering
\begin{lstlisting}[caption={Example of a priority function generated by \methodname}, label={lst:calculate_ged}, captionpos=b]
def priority(graph1: list[list[int]], graph2: list[list[int]], weights: list[list[int]]) -> list[list[float]]:
  """Improved version of `priority_v0`.
  This version uses a normalized degree similarity and handles cases where one or both nodes have no neighbors more gracefully.
  """
  n1 = len(graph1)
  n2 = len(graph2)
  max_node = max(n1, n2)
  refined_weights = [([0.0] * max_node) for _ in range(max_node)]
  for i in range(n1):
    for j in range(n2):
      node_similarity = weights[i][j]
      degree_i = sum(graph1[i])
      degree_j = sum(graph2[j])

      # Normalized degree similarity
      max_degree = max(1, degree_i, degree_j) # Avoid division by zero
      degree_similarity = 1 - (abs(degree_i - degree_j) / max_degree)


      neighbors_i = [k for k in range(n1) if graph1[i][k]]
      neighbors_j = [l for l in range(n2) if graph2[j][l]]
      neighbor_similarity = 0
      common_neighbors = 0
      for ni in neighbors_i:
        for nj in neighbors_j:
          neighbor_similarity += weights[ni][nj]
          common_neighbors += 1

      if common_neighbors > 0:
        neighbor_similarity /= common_neighbors
      elif (degree_i > 0) and (degree_j > 0):  # Both have neighbors but no common ones
        neighbor_similarity = 0.0           # Explicitly set to 0 for dissimilarity
      else:
          neighbor_similarity = 1.0 if (degree_i == 0 and degree_j == 0) else 0.0 # 1 if both have no neighbors, else 0


      # Improved neighbor similarity factor, avoids potential division by zero
      neighbor_similarity_factor = neighbor_similarity * (min(degree_i, degree_j) / max(1, max(degree_i, degree_j)))


      refined_weights[i][j] = ((2 * node_similarity * (1 + degree_similarity) + neighbor_similarity_factor) / (3 + degree_similarity))
  return refined_weights

\end{lstlisting}
\end{figure*}}


\end{document}